\newtheorem{rem} {Remark}
\newcommand{\lb}{\left(}
\newcommand{\rb}{\right)}
\newcommand{\bA}{\mathbf{A}}
\newcommand{\bV}{\mathbf{V}}
\newcommand{\bW}{\mathbf{W}}
\newcommand{\bWN}{\mathbf{W}_{\mathcal{N}}}
\newcommand{\bL}{\mathbf{L}}
\newcommand{\btL}{\widetilde{\mathbf{L}}}
\newcommand{\bLN}{\mathbf{L}_{\mathcal{N}}}
\newcommand{\btLN}{\widetilde{\mathbf{L}}_{\mathcal{N}}}
\newcommand{\bLambda}{\mathbf{\Lambda}}
\newcommand{\bM}{\mathbf{M}}
\newcommand{\bT}{\mathbf{T}}
\newcommand{\bQ}{\mathbf{Q}}
\newcommand{\bU}{\mathbf{U}}
\newcommand{\bR}{\mathbf{R}}
\newcommand{\bS}{\mathbf{S}}
\newcommand{\bone}{\mathbf{1}}
\newcommand{\bzero}{\mathbf{0}}
\newcommand{\bx}{\mathbf{x}}
\newcommand{\by}{\mathbf{y}}
\newcommand{\bu}{\mathbf{u}}
\newcommand{\hatm}{\widehat{m}}
\newcommand{\hn}{\widehat{n}}
\newcommand{\hG}{\widehat{G}}
\newcommand{\bX}{\mathbf{X}}
\newcommand{\bI}{\mathbf{I}}
\newcommand{\bO}{\mathbf{O}}
\newcommand{\bv}{\mathbf{v}}
\newcommand{\trace}{\textnormal{trace}}
\newcommand{\diag}{\textnormal{diag}}
\title{ Incremental Eigenpair Computation for Graph Laplacian Matrices: Theory and Applications\footnote{This manuscript is an extended version of a paper that was presented at
ACM KDD Workshop on Mining and Learning with Graphs (MLG 2016)~\cite{chen2016incremental}.
}}
\author{Pin-Yu Chen \and
	Baichuan Zhang \and Mohammad Al Hasan 
}
\institute{
Pin-Yu Chen \at
		AI Foundations - Learning Group, IBM Thomas J. Watson Research Center \\
		\email{\{pin-yu.chen@ibm.com\}} \\
Baichuan Zhang \at 
	   Purdue University, West Lafayette \\
       \email{\{zhan1910@purdue.edu\}} \\
Mohammad Al Hasan \at
              Indiana University Purdue University Indianapolis \\
              \email{\{alhasan@cs.iupui.edu\}}   
}
\begin{document}

\maketitle

\begin{abstract}

The smallest eigenvalues and the associated eigenvectors (i.e., eigenpairs) of a graph Laplacian 
matrix have been widely used in spectral clustering and community detection.  
However, in real-life applications the number of clusters or communities (say, $K$) 
is generally unknown a-priori. Consequently, the majority of the existing methods either choose $K$ heuristically 
or they repeat the clustering method with different choices of $K$ and accept the best clustering result.  
The first option, more often, yields suboptimal result, while the second option is computationally expensive.  
In this work, we propose an incremental method for constructing the eigenspectrum of the graph Laplacian matrix. 
This method leverages the eigenstructure of graph Laplacian matrix to obtain the $K$-th smallest eigenpair of the Laplacian matrix 
given a collection of all previously computed $K-1$ smallest eigenpairs. Our proposed method adapts the Laplacian matrix such that 
the batch eigenvalue decomposition problem transforms into an efficient sequential leading eigenpair computation problem. 
As a practical application, we consider user-guided spectral clustering. 
Specifically, we demonstrate that users can utilize the proposed incremental method 
for effective eigenpair computation and for determining the desired number of clusters based on multiple clustering metrics.

\end{abstract}

\keywords{Graph Mining and Analysis, Graph Laplacian, Incremental Eigenpair Computation, User-Guided Spectral Clustering }


\section{Introduction}

Over the past two decades, the graph Laplacian matrix and its variants have
been widely adopted for solving various research tasks, including graph
partitioning \cite{pothen1990partitioning}, data clustering \cite{Luxburg07,wu2016revisiting,chen2016efficient},
community detection \cite{White05,CPY14spectral,chen2017revisiting}, consensus in networks
\cite{Olfati-Saber07,wu2009empirical}, accelerated distributed optimization \cite{liu2017accelerated}, 
dimensionality reduction \cite{belkin2003laplacian,wu2016estimating},
entity disambiguation \cite{ZhangSaha14,Zhang.Dundar.ea:16,Zhang.Hasan.3,tkde_online,Zhang.Noman.ea:17,Zhang.Hasan.ea:15}, link prediction \cite{Hasan06linkprediction,Hasan2011, Zhang.Hasan.ea:16,icde-sutanay}, graph signal processing
\cite{Shuman13,chen2017bias}, centrality measures for graph connectivity \cite{CPY14deep}, multi-layer network analysis \cite{chen_tsipn,weiyi}, 
interconnected physical systems \cite{Radicchi13}, network vulnerability
assessment \cite{CPY14ComMag}, image segmentation \cite{Shi00,icmla-dundar}, gene expression~\cite{zifan1,zifan2,liu2017genome}, among others.
The fundamental task is to represent the data of interest as a graph for
analysis, where a node represents an entity (e.g., a pixel in an image or a user in an online
social network) and an edge represents similarity between two multivariate data
samples or actual relation  (e.g., friendship) between nodes \cite{Luxburg07}.
More often the $K$ eigenvectors associated with the $K$ smallest eigenvalues of
the graph Laplacian matrix are used to cluster the entities into $K$ clusters
of high similarity. For brevity, throughout this paper we will call these
eigenvectors as the $K$ smallest eigenvectors. 

The success of graph Laplacian matrix based methods for graph partitioning and
spectral clustering can be explained by the fact that acquiring  $K$ smallest
eigenvectors is equivalent to solving a relaxed graph cut minimization problem,
which partitions a graph into $K$ clusters by minimizing various objective
functions including min cut, ratio cut or normalized cut \cite{Luxburg07}.
Generally, in clustering the value $K$ is selected to be much smaller than $n$ (the
number of data points), making full eigen decomposition (such as QR decomposition) unnecessary. An efficient alternative
is to use methods that are based on power iteration, such as Arnoldi method or
Lanczos method, which computes the leading eigenpairs through repeated matrix
vector multiplication \cite{wu2015preconditioned,wu2017primme_svds}. 
ARPACK \cite{lehoucq1998arpack}
library is a popular parallel implementation of different variants of Arnoldi
and Lanczos methods, which is used by many commercial software including Matlab.

However, in most situations the best value of $K$ is unknown and a heuristic is
used by clustering algorithms to determine the number of clusters, e.g., fixing
a maximum number of clusters $K_{\max}$ and detecting a large gap in the values
of the $K_{\max}$ largest sorted eigenvalues or normalized cut score
\cite{Polito01grouping,ng2002spectral}.  Alternatively, this value of $K$ can
be determined based on domain knowledge \cite{basu2004active}. For example, a
user may require that the largest cluster size be no more than 10\% of the
total number of nodes or that the total inter-cluster edge weight be no greater
than a certain amount.  In these cases, the desired choice of $K$ cannot be
determined \textit{a priori}.  Over-estimation of the upper bound $K_{\max}$  on the
number  of clusters is expensive as the cost of finding $K$ eigenpairs using
the power iteration method grows rapidly with $K$.  On the other hand, choosing
an insufficiently large value for $K_{\max}$ runs the risk of severe bias.
Setting $K_{\max}$ to the number of data points $n$ is generally
computationally infeasible, even for a moderate-sized graph.  Therefore, an
incremental eigenpair computation method that effectively computes the $K$-th
smallest eigenpair of graph Laplacian matrix by utilizing the previously
computed $K-1$ smallest eigenpairs is needed.  Such an iterative method
obviates the need to set an upper bound $K_{\max}$ on $K$, and its efficiency
can be explained by the adaptivity to increments in $K$.

By exploiting the special matrix properties and graph characteristics of a graph Laplacian matrix, we propose an efficient method for computing the $(K+1)$-th eigenpair given all of the $K$ smallest eigenpairs, which we call the Incremental method of Increasing Orders (Incremental-IO).
For each increment, given the previously computed
smallest eigenpairs, we show that computing the next smallest eigenpair is
equivalent to computing a leading eigenpair of a particular matrix, which
transforms potentially tedious numerical computation (such as the
iterative tridiagonalization and eigen-decomposition steps in the Lanczos algorithm \cite{lanczos1950iteration}) to
simple matrix power iterations of known computational efficiency
\cite{kuczynski1992estimating}. Specifically, we show that Incremental-IO can be implemented via power iterations, and analyze its computational complexity and data storage requirements.
We then compare the performance of Incremental-IO with a batch computation method which computes all of the $K$ smallest eigenpairs in a single batch, and an incremental method adapted from the Lanczos algorithm, which we call the Lanczos method of Increasing Orders (Lanczos-IO). 
For a given number of eigenpairs $K$ iterative
matrix-vector multiplication of Lanczos procedure yields a set of Lanczos
vectors ($\mathbf{Q}_\ell$), and a tridiagonal matrix ($\mathbf{T}_\ell$), followed
by a full eigen-decomposition of $\mathbf{T}_\ell$ ($\ell$ is a value much smaller
than the matrix size). 
Lanczos-IO saves the Lanczos vectors that were obtained while
$K$ eigenpairs were computed and used those to generate new Lanczos vectors for
computing the $(K+1)$-th eigenpair.

Comparing to the batch method, our experimental results show that for a given
order $K$, Incremental-IO provides a significant reduction in
computation time. Also, as $K$ increases, the gap
between Incremental-IO and the batch approach widens, providing an
order of magnitude speed-up. Experiments on
real-life datasets show that the performance of Lanczos-IO is overly sensitive
to the selection of augmented Lanczos vectors, a parameter that cannot be
optimized \textit{a priori}---for some of our experimental datasets, Lanczos-IO
performs even worse than the batch method (see Sec.~\ref{sec:ex}).
Moreover, Lanczos-IO consumes significant amount of memory as it has to save
the Lanczos vectors ($\mathbf{Q}_\ell$) for making the incremental approach
realizable. In summary, Lanczos-IO, although an incremental eigenpair computation
algorithm, falls short in terms of robustness.

To illustrate the real-life utility of incremental eigenpair computation methods, we
design a user-guided spectral clustering algorithm which uses Incremental-IO.  The algorithm provides clustering solution for a sequence of $K$
values efficiently, and thus enable a user to compare these clustering
solutions for facilitating the selection of the most appropriate clustering.

\pagebreak 
The contributions of this paper are summarized as follows:
\begin{enumerate}
	\item We propose an incremental eigenpair computation method (Incremental-IO) for both
unnormalized and normalized graph Laplacian matrices, by transforming the
original eigenvalue decomposition problem into an efficient sequential leading
eigenpair computation problem. Specifically, Incremental-IO can be implemented via power iterations, which possess efficient computational complexity and data storage. Simulation results show that Incremental-IO
generates the desired eigenpair accurately and has superior performance over the
batch computation method in terms of computation time.  

\item We show that Incremental-IO is robust in comparison to Lanczos-IO,
  which is an incremental eigenpair method that we design by adapting the
  Lanczos method.
\item We use several real-life datasets to demonstrate the utility of Incremental-IO. Specifically, we show that
	Incremental-IO is suitable for user-guided spectral clustering which provides
	a sequence of clustering results for unit increment of the number $K$ of clusters, and updates the associated
	cluster evaluation metrics for helping a user in decision making. \end{enumerate}

\section{Related Work}
\label{sec_related}
%

\subsection{Incremental eigenvalue decomposition}
The proposed method (Incremental-IO) aims to incrementally compute the smallest eigenpair of a
given graph Laplacian matrix. 
 There are
several works that are named as incremental eigenvalue decomposition methods
\cite{ning2007incremental,jia2009incremental,ning2010incremental,dhanjal2014efficient,ranjan2014incremental}.
However, these works focus on updating the eigenstructure of graph Laplacian matrix
of dynamic graphs when nodes (data samples) or edges are inserted or
deleted from the graph, which are fundamentally different from incremental computation of eigenpairs of increasing orders.
Consequently, albeit similarity in research topics, they are two distinct sets of problems and cannot be directly compared.

\subsection{Cluster Count Selection for Spectral Clustering}
Many spectral clustering algorithms utilize the eigenstructure of graph
Laplacian matrix for selecting number of clusters. In \cite{Polito01grouping},
a value $K$ that maximizes the gap
between the $K$-th and the $(K+1)$-th smallest eigenvalue is selected as the
number of clusters. In \cite{ng2002spectral}, a value $K$ that minimizes the
sum of cluster-wise Euclidean distance between each data point and the centroid
obtained from K-means clustering on $K$ smallest eigenvectors is selected as
the number of clusters. In \cite{zelnik2004self}, the smallest eigenvectors of
normalized graph Laplacian matrix are rotated to find the best alignment that
reflects the true clusters.  A model based method for determining the number of
clusters is proposed in \cite{Poon2012}.  
In \cite{chen2016phase}, a model order selection criterion for identifying the number of clusters is proposed by estimating the interconnectivity of the graph using
the eigenpairs of the graph Laplacian matrix.  In \cite{van2001graph}, the
clusters are identified via random walk on graphs. In \cite{blondel2008fast},
an iterative greedy modularity maximization approach is proposed for
clustering. In \cite{Krzakala2013,Saade2015spectral}, the eigenpairs of the
nonbacktracking matrix are used to identify clusters.
Note that aforementioned methods use
only one single clustering metric to determine the number of clusters and often
implicitly assume an upper bound on $K$ (namely $K_{\max}$).
As demonstrated in Sec. \ref{sec:ex}, the proposed incremental eigenpair computation method (Incremental-IO) can be used to efficiently provide a sequence of clustering results for unit increment of the number $K$ of clusters and updates the associated (potentially multiple) cluster evaluation metrics.

\section{Incremental Eigenpair Computation for Graph Laplacian Matrices}
\label{sec_incremental_eigenpair}
\subsection{Background}
Throughout this paper bold uppercase letters (e.g., $\bX$) denote matrices
and $\bX_{ij}$ (or $[\bX]_{ij}$) denotes the entry in $i$-th row and $j$-th
column of $\bX$, bold lowercase letters (e.g., $\bx$ or $\bx_i$) denote
column vectors, $(\cdot)^T$ denotes matrix or vector transpose, italic
letters (e.g., $x$ or $x_i$) denote scalars, and calligraphic uppercase letters
(e.g., $\mathcal{X}$ or $\mathcal{X}_i$) denote sets.  The $n \times 1$ vector
of ones (zeros) is denoted by $\bone_n$ ($\bzero_n$). The matrix $\bI$ denotes
an identity matrix and the matrix $\bO$ denotes the matrix of zeros.

We use two $n \times n$ symmetric matrices, $\bA$ and $\bW$, to denote the
adjacency and weight matrices of an undirected weighted simple graph $G$ with $n$
nodes and $m$ edges. $\bA_{ij}=1$ if there is an edge between nodes $i$ and
$j$, and $\bA_{ij}=0$  otherwise. $\bW$ is a nonnegative symmetric matrix such that
$\bW_{ij} \geq 0$ if $\bA_{ij}=1$ and $\bW_{ij}=0$ if $\bA_{ij}=0$. Let
$s_i=\sum_{j=1}^n \bW_{ij}$ denote the strength of node $i$. Note that when
$\bW=\bA$, the strength of a node is equivalent to its degree. 
$\bS=\diag ( s_1, s_2, \ldots, s_n )$ is a diagonal matrix with nodal strength
on its main diagonal and the off-diagonal entries being zero. 

The (unnormalized) graph Laplacian matrix is defined as 
\begin{align}
\label{eqn_unnormalized_graph_Laplacian}
\bL=\bS-\bW.
\end{align}
One popular variant of the graph Laplacian matrix is the normalized graph Laplacian matrix defined as 
\begin{align}
\label{eqn_normalized_graph_Laplacian}
\bLN=\bS^{-\frac{1}{2}} \bL \bS^{-\frac{1}{2}} = \bI - \bS^{-\frac{1}{2}} \bW \bS^{-\frac{1}{2}},
\end{align}
where  $\bS^{-\frac{1}{2}}=\diag ( \frac{1}{\sqrt{s_1}}, \frac{1}{\sqrt{s_2}},
\ldots, \frac{1}{\sqrt{s_n}}  )$. 
The $i$-th smallest eigenvalue and its associated unit-norm eigenvector of
$\bL$ are denoted by $\lambda_i(\bL)$ and $\bv_i (\bL)$, respectively. That is,
the eigenpair $(\lambda_i,\bv_i)$ of $\bL$ has the relation $\bL
\bv_i=\lambda_i \bv_i$, and $\lambda_1(\bL) \leq \lambda_2(\bL) \leq \ldots
\leq \lambda_n (\bL)$. The eigenvectors have unit Euclidean norm and they are
orthogonal to each other such that $\bv_i^T \bv_j=1$ if $i=j$ and $\bv_i^T
\bv_j=0$ if $i \neq j$. The eigenvalues of $\bL$ are said to be distinct if
$\lambda_1(\bL)< \lambda_2(\bL)<\ldots<\lambda_n(\bL)$.  Similar notations are
used for $\bLN$. 


\begin{table*}[t]
	\centering	
	{ \scriptsize	
	\caption{Utility of the established lemmas, corollaries, and theorems.}
	\label{table_utility_incremental}
	\begin{tabular}{c|c|c}
		\hline
		Graph Type / Graph Laplacian Matrix & Unnormalized                                 & Normalized                                     \\ \hline
		Connected Graphs              & \textbf{Lemma} \ref{lem_connect_GL}, \textbf{Theorem} \ref{thm_connect_GL}   & \textbf{Corollary} \ref{cor_connect_NGL}, \textbf{Corollary} \ref{cor_connect_NGL_incremental} \\ \hline
		Disconnected Graphs           & \textbf{Lemma} \ref{lem_disconnect_GL}, \textbf{Theorem} \ref{thm_disconnect_GL}  & \textbf{Corollary} \ref{cor_disconnect_NGL}, \textbf{Corollary} \ref{cor_disconnect_NGL_incremental}                           \\ \hline
	\end{tabular}
	}	
\end{table*}

\subsection{Theoretical foundations of the proposed method (Incremental-IO)}
\label{subsec_Incremental_pre}

The following lemmas and corollaries provide the cornerstone for establishing
the proposed incremental eigenpair computation method (Incremental-IO). The main idea is that we utilize the
eigenspace structure of graph Laplacian matrix to inflate specific eigenpairs
via a particular perturbation matrix, without affecting other eigenpairs.
Incremental-IO can be viewed as a specialized Hotelling's deflation method \cite{parlett1980symmetric} designed for graph Laplacian matrices by exploiting their spectral properties and associated graph characteristics.
It works for both connected, and disconnected graphs using either
normalized or unnormalized graph Laplacian matrix. For illustration purposes, in Table \ref{table_utility_incremental} we group the
established lemmas, corollaries, and theorems under different graph types and
different graph Laplacian matrices.

\begin{lemma}
	\label{lem_connect_GL}
	Assume that $G$ is a connected graph and $\bL$ is the graph Laplacian with $s_i$ denoting the sum of the entries in the $i$-th row of the weight matrix $\bW$. Let $s=\sum_{i=1}^n s_i$ and define
	$\btL=\bL+\frac{s}{n} \bone_n \bone_n ^T.$ 
	Then the eigenpairs of $\btL$ satisfy $(\lambda_i(\btL),\bv_i(\btL))=(\lambda_{i+1}(\bL),\bv_{i+1}(\bL))$ for $1 \leq i \leq n-1$ and $(\lambda_n(\btL),\bv_n(\btL))=(s,\frac{\bone_n}{\sqrt{n}})$.
\end{lemma}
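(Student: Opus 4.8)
The plan is to recognize the perturbation matrix as a rank-one update aligned with the known bottom eigenvector of $\bL$, so that it inflates exactly one eigenvalue while leaving every other eigenvalue and every eigenvector untouched. First I would record the two standard structural facts about the Laplacian of a connected graph. Since every row of $\bL=\bS-\bW$ sums to zero, $\bL\bone_n=\bzero_n$, so $\bone_n$ is an eigenvector with eigenvalue $0$; connectivity forces this eigenvalue to be simple, hence $\lambda_1(\bL)=0$ with $\bv_1(\bL)=\bone_n/\sqrt{n}$ and $0<\lambda_2(\bL)\le\cdots\le\lambda_n(\bL)$. Writing $\bv_1=\bone_n/\sqrt{n}$, the defining perturbation becomes $\frac{s}{n}\bone_n\bone_n^T = s\,\bv_1\bv_1^T$, a rank-one symmetric matrix supported on the span of $\bv_1$.

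Next I would exploit the orthonormal eigenbasis $\{\bv_1,\dots,\bv_n\}$ of the symmetric matrix $\bL$ and apply $\btL=\bL+s\,\bv_1\bv_1^T$ to each basis vector. For $\bv_1$ this gives $\btL\bv_1=\bL\bv_1+s\,\bv_1(\bv_1^T\bv_1)=\bzero_n+s\bv_1=s\bv_1$, so the formerly-zero eigenvalue is inflated to $s$. For $j\ge 2$, the orthogonality $\bv_1^T\bv_j=0$ yields $\btL\bv_j=\lambda_j(\bL)\bv_j+s\,\bv_1(\bv_1^T\bv_j)=\lambda_j(\bL)\bv_j$, so each remaining eigenpair is preserved verbatim. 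Thus the spectrum of $\btL$ is exactly $\{s,\lambda_2(\bL),\dots,\lambda_n(\bL)\}$ with the same eigenvectors, and only the reindexing of eigenvalues remains to be justified.

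The one step requiring genuine care is verifying that the inflated value $s$ is truly the \emph{largest} eigenvalue of $\btL$, that is $s\ge\lambda_n(\bL)$, so that it lands in position $n$ rather than somewhere in the middle of the reindexed spectrum. Here I would invoke the trace identity: for a simple graph the diagonal of $\bL$ is $\bL_{ii}=s_i$, so $\trace(\bL)=\sum_i s_i = s$, which also equals $\sum_{j}\lambda_j(\bL)$. Since $\lambda_1(\bL)=0$ and the remaining eigenvalues are positive, $s=\sum_{j=2}^{n}\lambda_j(\bL)\ge\lambda_n(\bL)$, so $s$ dominates the rest of the spectrum. Consequently $(\lambda_n(\btL),\bv_n(\btL))=(s,\bone_n/\sqrt{n})$, while the surviving pairs occupy the bottom $n-1$ slots in their original order, giving $(\lambda_i(\btL),\bv_i(\btL))=(\lambda_{i+1}(\bL),\bv_{i+1}(\bL))$ for $1\le i\le n-1$, as claimed. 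I expect this ordering argument, rather than the rank-one computation itself, to be the only delicate point, since it is what guarantees the inflated eigenpair can be peeled off as the leading one.
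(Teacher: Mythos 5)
Your proposal is correct and follows essentially the same route as the paper's proof: both exploit the fact that $\frac{s}{n}\bone_n\bone_n^T = s\,\bv_1(\bL)\bv_1^T(\bL)$ is a rank-one perturbation aligned with the zero-eigenvalue eigenvector of the connected graph's Laplacian, and both use the trace identity $\trace(\bL)=s$ to conclude that the inflated eigenvalue $s$ dominates the rest of the spectrum. If anything, your ordering step is slightly more careful, since the non-strict bound $s\ge\lambda_n(\bL)$ suffices (and, unlike the paper's strict inequality $\lambda_n(\bL)<s$, also covers the degenerate case $n=2$).
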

\begin{proof}
		Since $\bL$ is a positive  semidefinite (PSD) matrix \cite{Chung97SpectralGraph}, $\lambda_i(\bL) \geq 0$ for all $i$. Since $G$ is a connected graph, by (\ref{eqn_unnormalized_graph_Laplacian}) $\bL \bone_n=(\bS - \bW) \bone_n=\bzero_n$. Therefore, by the PSD property we have $(\lambda_1(\bL),\bv_1(\bL))=(0,\frac{\bone_n}{\sqrt{n}})$. Moreover, since $\bL$ is a symmetric real-valued square matrix,
	from (\ref{eqn_unnormalized_graph_Laplacian}) we have 
	\begin{align}
	\label{eqn_trace_connect_GL}
	\trace (\bL)&=\sum_{i=1}^n \bL_{ii}   \nonumber \\
	&=\sum_{i=1}^{n} \lambda_i(\bL)  \nonumber \\
	&=\sum_{i=1}^n s_i  \nonumber \\
	&=s.	
	\end{align}
	
	By the PSD property of $\bL$, we have $\lambda_n(\bL) < s$ since $\lambda_2(\bL)>0$ for any connected graph.
	Therefore, by the orthogonality of eigenvectors of $\bL$ (i.e., $\bone_n^T \bv_i(\bL) =0$ for all $i \geq 2$) the eigenvalue decomposition of $\btL$ can be represented as 
	\begin{align}
	\btL&=\sum_{i=2}^n  \lambda_i (\bL) \bv_i(\bL) \bv_i^T(\bL)+\frac{s}{n} \bone_n \bone_n ^T \nonumber \\
	&=\sum_{i=1}^n  \lambda_i (\btL) \bv_i(\btL) \bv_i^T(\btL),
	\end{align}	
	where $(\lambda_n (\btL),\bv_n(\btL))=(s,\frac{\bone_n}{\sqrt{n}})$ and $(\lambda_i(\btL),\bv_i(\btL))=$
	$(\lambda_{i+1}(\bL),\bv_{i+1}(\bL))$ for $1 \leq i \leq n-1$.
\end{proof}


\begin{corollary}
	\label{cor_connect_NGL}
	For a normalized graph Laplacian matrix $\bLN$, assume $G$ is a connected graph and let $\btLN=\bLN+\frac{2}{s} \bS^{\frac{1}{2}} \bone_n \bone_n^T \bS^{\frac{1}{2}}$. Then  $(\lambda_i(\btLN),\bv_i(\btLN))=(\lambda_{i+1}(\bLN),
	\bv_{i+1}(\bLN))$ for $1 \leq i \leq n-1$ and $(\lambda_n(\btLN),\bv_n(\btLN))=(2,\frac{\bS^{\frac{1}{2}} \bone_n}{\sqrt{s}})$.
\end{corollary}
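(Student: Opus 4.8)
The plan is to follow the proof of Lemma~\ref{lem_connect_GL} almost verbatim, replacing the constant eigenvector $\frac{\bone_n}{\sqrt{n}}$ of $\bL$ by its normalized counterpart for $\bLN$ and the additive constant $s$ by $2$. The key observation that makes everything go through is that the rank-one perturbation $\frac{2}{s}\bS^{\frac{1}{2}}\bone_n\bone_n^T\bS^{\frac{1}{2}}$ is exactly twice the spectral projector onto the smallest eigenvector of $\bLN$, so adding it inflates only that one eigenvalue and leaves the rest of the eigenstructure intact.

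First I would pin down the smallest eigenpair of $\bLN$. Since $\bLN=\bS^{-\frac{1}{2}}\bL\bS^{-\frac{1}{2}}$ is a congruence transform of the PSD matrix $\bL$ by the invertible matrix $\bS^{-\frac{1}{2}}$, Sylvester's law of inertia shows $\bLN$ is PSD, so $\lambda_i(\bLN)\geq 0$ for all $i$. Connectivity gives $\bL\bone_n=\bzero_n$, hence $\bLN(\bS^{\frac{1}{2}}\bone_n)=\bS^{-\frac{1}{2}}\bL\bone_n=\bzero_n$; normalizing $\bS^{\frac{1}{2}}\bone_n$, whose norm is $\sqrt{\bone_n^T\bS\bone_n}=\sqrt{s}$, yields $(\lambda_1(\bLN),\bv_1(\bLN))=(0,\frac{\bS^{\frac{1}{2}}\bone_n}{\sqrt{s}})$. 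A direct computation then gives $\bv_1(\bLN)\bv_1^T(\bLN)=\frac{1}{s}\bS^{\frac{1}{2}}\bone_n\bone_n^T\bS^{\frac{1}{2}}$, so the perturbation is precisely $2\bv_1(\bLN)\bv_1^T(\bLN)$ and $\btLN=\bLN+2\bv_1(\bLN)\bv_1^T(\bLN)$. Substituting the spectral decomposition $\bLN=\sum_{i=2}^n\lambda_i(\bLN)\bv_i(\bLN)\bv_i^T(\bLN)$ (the $i=1$ term drops because $\lambda_1(\bLN)=0$) exhibits a full orthonormal eigendecomposition of $\btLN$ in which $\bv_2(\bLN),\ldots,\bv_n(\bLN)$ keep their eigenvalues while $\bv_1(\bLN)$ now carries eigenvalue $2$.

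The remaining step, which plays the role that the bound $\lambda_n(\bL)<s$ plays in Lemma~\ref{lem_connect_GL}, is the main obstacle: I must verify that the inflated value $2$ sits at the very top of the spectrum so that the reindexing in the statement is legitimate. Here I would invoke the standard fact that all eigenvalues of a normalized graph Laplacian lie in $[0,2]$, i.e., $\lambda_n(\bLN)\leq 2$; this follows because $\bS^{-\frac{1}{2}}\bW\bS^{-\frac{1}{2}}$ is similar (via $\bS^{\frac{1}{2}}$) to the row-stochastic random-walk matrix $\bS^{-1}\bW$, whose eigenvalues lie in $[-1,1]$, so the eigenvalues of $\bLN=\bI-\bS^{-\frac{1}{2}}\bW\bS^{-\frac{1}{2}}$ lie in $[0,2]$. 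Given $2\geq\lambda_n(\bLN)\geq\cdots\geq\lambda_2(\bLN)$, sorting the eigenvalues of $\btLN$ in increasing order places $\lambda_2(\bLN),\ldots,\lambda_n(\bLN)$ in positions $1,\ldots,n-1$ and the value $2$ in position $n$, which is exactly $(\lambda_i(\btLN),\bv_i(\btLN))=(\lambda_{i+1}(\bLN),\bv_{i+1}(\bLN))$ for $1\leq i\leq n-1$ and $(\lambda_n(\btLN),\bv_n(\btLN))=(2,\frac{\bS^{\frac{1}{2}}\bone_n}{\sqrt{s}})$. The only delicate case is a bipartite $G$, for which $\lambda_n(\bLN)=2$ and the top eigenvalue becomes a double root; the reindexing is then ambiguous only up to the choice of basis within that eigenspace, and the stated eigenpair remains valid.
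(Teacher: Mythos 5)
Your proof is correct and takes essentially the same route as the paper's: identify $(0,\frac{\bS^{\frac{1}{2}}\bone_n}{\sqrt{s}})$ as the smallest eigenpair of $\bLN$, observe that the perturbation $\frac{2}{s}\bS^{\frac{1}{2}}\bone_n\bone_n^T\bS^{\frac{1}{2}}$ is exactly $2\bv_1(\bLN)\bv_1^T(\bLN)$, and use the bound $\lambda_n(\bLN)\leq 2$ to justify the reindexing. You differ only in supplying self-contained justifications for facts the paper cites from the literature (positive semidefiniteness via Sylvester's law of inertia, the $[0,2]$ bound via similarity to the random-walk matrix) and in explicitly handling the bipartite case $\lambda_n(\bLN)=2$, a degenerate situation that the paper's appeal to the strict-inequality argument of Lemma~\ref{lem_connect_GL} passes over in silence.
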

\begin{proof}
		Recall from (\ref{eqn_normalized_graph_Laplacian}) that $\bLN=\bS^{-\frac{1}{2}} \bL \bS^{-\frac{1}{2}}$, and also we have $\bLN \bS^{\frac{1}{2}} \bone_n=\bS^{-\frac{1}{2}} \bL \bone_n=\bzero_n$. Moreover, it can be shown that $0 \leq \lambda_1(\bLN) \leq \lambda_2(\bLN) \leq \ldots \leq \lambda_n(\bLN) \leq 2$ \cite{Merris94}, and 
	$\lambda_2(\bLN)>0$ if $G$ is connected. 
	Following the same derivation procedure for \textbf{Lemma} \ref{lem_connect_GL} we obtain the corollary. Note that $\bS^{\frac{1}{2}}=\diag(\sqrt{s_1},\sqrt{s_2},\ldots,\sqrt{s_n})$ and $( \bS^{\frac{1}{2}} \bone_n )^T \bS^{\frac{1}{2}} \bone_n=\bone_n^T \bS \bone_n=s$.
\end{proof}

\begin{lemma}
	\label{lem_disconnect_GL}
	Assume that $G$ is a disconnected graph with $\delta \geq 2$ connected components. Let $s=\sum_{i=1}^n s_i$, let $\bV=[\bv_1(\bL),\bv_2(\bL),\ldots,\bv_\delta(\bL)]$, and let 
	$\btL=\bL+ s \bV \bV^T$. 
	Then $(\lambda_i(\btL),\bv_i(\btL))=(\lambda_{i+\delta}(\bL),\bv_{i+\delta}(\bL))$ for $1 \leq i \leq n-\delta$,
	$\lambda_{i}(\btL)=s$ for $n-\delta+1 \leq i \leq n$, and 
	$[\bv_{n-\delta+1}(\btL),\bv_{n-\delta+2},(\btL),\ldots,\bv_{n}(\btL)]=\bV$.
\end{lemma}
\begin{proof}
	The graph Laplacian matrix of a disconnected graph consisting of $\delta$ connected components can be represented as a matrix with diagonal block structure, where each block in the diagonal corresponds to one connected component in $G$\cite{CPY13GlobalSIP}, that is,
\begin{align}
\label{eqn_L_disconnect}
\bL=
\left[
\begin{matrix}
\bL_1 & \bO     & \bO      & \bO  \\
\bO    & \bL_2 & \bO      & \bO \\
\bO     & \bO     & \ddots & \bO   \\
\bO     & \bO     & \bO      & \bL_\delta 
\end{matrix}
\right],
\end{align}
where $\bL_k$ is the graph Laplacian matrix of $k$-th connected component in $G$. From the proof of \textbf{Lemma} \ref{lem_connect_GL} each connected component contributes to exactly one zero eigenvalue for $\bL$, and 
\begin{align}
\lambda_n(\bL) &< \sum_{k=1}^{\delta} \sum_{i \in \text{component}~k} \lambda_i(\bL_k)
\nonumber \\
&=\sum_{k=1}^{\delta} \sum_{i \in \text{component}~k} s_i  \nonumber \\
&=s.
\end{align}
Therefore, we have the results in \textbf{Lemma} \ref{lem_disconnect_GL}.
\end{proof}

\textbf{Lemma} \ref{lem_connect_GL} applies to the (unnormalized) graph Laplacian matrix of a connected graph, and
the corollary below applies to the normalized graph Laplacian matrix of a connected graph.

\begin{corollary}
	\label{cor_disconnect_NGL}
	For a normalized graph Laplacian matrix $\bLN$, assume $G$ is a disconnected graph with $\delta \geq 2$ connected components. Let 
	$\bV_\delta=[\bv_1(\bLN),\bv_2(\bLN), \\ \ldots,\bv_\delta(\bLN)]$, and let 
	$\btLN=\bLN+ 2 \bV_\delta \bV_{\delta}^T.$ 
	Then $(\lambda_i(\btLN),\bv_i(\btLN))=(\lambda_{i+\delta}(\bLN),\\ \bv_{i+\delta}(\bLN))$ for $1 \leq i \leq n-\delta$,
	$\lambda_{i}(\btLN)=2$ for $n-\delta+1 \leq i \leq n$, and 
	$[\bv_{n-\delta+1}(\btLN),\bv_{n-\delta+2},(\btLN),\ldots,\bv_{n}(\btLN)]=\bV_\delta$.
\end{corollary}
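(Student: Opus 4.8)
The plan is to mirror the argument of Lemma~\ref{lem_disconnect_GL}, but with the scalar inflation replaced by the normalized-Laplacian inflation value of Corollary~\ref{cor_connect_NGL} and with the multiplicity of the zero eigenvalue accounted for. First I would invoke the block-diagonal decomposition: writing $\bLN$ in the block form analogous to (\ref{eqn_L_disconnect}), each diagonal block is the normalized Laplacian of one connected component. By the connected-graph analysis behind Corollary~\ref{cor_connect_NGL}, each component contributes exactly one zero eigenvalue, so $\bLN$ has a $\delta$-dimensional null space; that is, $\lambda_1(\bLN)=\cdots=\lambda_\delta(\bLN)=0$ with associated orthonormal eigenvectors forming the columns of $\bV_\delta$. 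At the same time I would record the spectral bound $0 \le \lambda_1(\bLN) \le \cdots \le \lambda_n(\bLN) \le 2$ from \cite{Merris94} (as already used in Corollary~\ref{cor_connect_NGL}), since this is what guarantees the inflated eigenpairs become the $\delta$ \emph{largest} after perturbation.

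The core step uses orthogonality of the eigenvectors of $\bLN$. Because the columns of $\bV_\delta$ are orthonormal and orthogonal to $\bv_{\delta+1}(\bLN),\ldots,\bv_n(\bLN)$, the matrix $\bV_\delta \bV_\delta^T=\sum_{i=1}^\delta \bv_i(\bLN)\bv_i^T(\bLN)$ is the orthogonal projector onto the null space of $\bLN$. Substituting the spectral decomposition of $\bLN$ (whose first $\delta$ terms vanish, as their eigenvalues are zero) I would obtain
\[
\btLN = \sum_{i=\delta+1}^n \lambda_i(\bLN)\, \bv_i(\bLN) \bv_i^T(\bLN) + 2 \sum_{i=1}^\delta \bv_i(\bLN) \bv_i^T(\bLN).
\]
This is already an eigen-decomposition of $\btLN$ sharing the eigenvectors of $\bLN$: the perturbation shifts each of the $\delta$ zero eigenvalues up to $2$ while leaving the remaining $n-\delta$ eigenpairs untouched.

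Finally I would reorder. Since $\lambda_i(\bLN)\le 2$ for all $i$, the $n-\delta$ unshifted eigenvalues fill the smallest slots and the $\delta$ inflated eigenvalues (all equal to $2$) fill the largest, giving $(\lambda_i(\btLN),\bv_i(\btLN))=(\lambda_{i+\delta}(\bLN),\bv_{i+\delta}(\bLN))$ for $1 \le i \le n-\delta$, $\lambda_i(\btLN)=2$ for $n-\delta+1 \le i \le n$, and the top $\delta$ eigenvectors being exactly the columns of $\bV_\delta$. The hard part will be the strictness of the ordering at the top of the spectrum: when a component is bipartite its normalized Laplacian attains the eigenvalue $2$, so the inflated value may tie with a pre-existing eigenvalue of $\bLN$. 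I would dispose of this by observing that such ties occur only \emph{at} the shared value $2$, where the displayed decomposition remains a valid eigen-decomposition regardless of the basis chosen for that eigenspace; hence identifying the top $\delta$ eigenvectors with $\bV_\delta$ is legitimate as one admissible choice, which is all the statement requires.
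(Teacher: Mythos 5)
Your proposal is correct and follows essentially the same route as the paper: the block-diagonal structure of $\bLN$ giving a $\delta$-dimensional null space spanned by $\bV_\delta$, the bound $\lambda_n(\bLN)\leq 2$ from \cite{Merris94}, and the orthogonality-based spectral decomposition that inflates only the zero eigenvalues. Your explicit treatment of possible ties at the value $2$ (bipartite components) is in fact more careful than the paper's proof, which invokes the bound $\lambda_n(\bLN)\leq 2$ without addressing that equality case.
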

\begin{proof}
		The results can be obtained by following the same derivation proof procedure as in
 Lemma \ref{lem_disconnect_GL} and the fact that 
	$\lambda_n(\bLN) \leq 2$ \cite{Merris94}.
\end{proof}

\begin{rem}
	note that the columns of any matrix $\bV^\prime= \bV \bR$ with an orthonormal transformation matrix $\bR$ (i.e., $\bR^T \bR=\bI$) are also the largest $\delta$ eigenvectors of $\btL$ and $\btLN$ in \textbf{Lemma} \ref{lem_disconnect_GL} and \textbf{Corollary} \ref{cor_disconnect_NGL}. Without loss of generality we consider the case $\bR=\bI$.
\end{rem}

\subsection{Incremental method of increasing orders (Incremental-IO)}
\label{subsec_incremental_method}
Given the $K$ smallest eigenpairs of a graph Laplacian matrix, we prove that
computing the $(K+1)$-th smallest eigenpair is equivalent to computing the
leading eigenpair (the eigenpair with the largest eigenvalue in absolute value)
of a certain perturbed matrix. The advantage of this
transformation is that the leading eigenpair can be efficiently computed via
matrix power iteration methods \cite{lanczos1950iteration,lehoucq1998arpack}.  

Let $\bV_{K}=[\bv_1{(\bL)},\bv_2{(\bL)},\ldots,\bv_K{(\bL)}]$ be the matrix
with columns being the $K$ smallest eigenvectors of $\bL$ and let
$\bLambda_{K}=\diag
(s-\lambda_1(\bL),s-\lambda_2(\bL),\ldots,s-\lambda_K(\bL))$ be the diagonal
matrix with $\{s-\lambda_i(\bL)\}_{i=1}^K$ being its main diagonal. The
following theorems show that given the $K$ smallest eigenpairs of $\bL$, the
$(K+1)$-th smallest eigenpair of $\bL$ is the leading eigenvector of the
original graph Laplacian matrix perturbed by a certain matrix.

\begin{theorem}{\textnormal{(connected graphs)}} 
	\label{thm_connect_GL}
	Given $\bV_{K}$ and $\bLambda_{K}$, assume that $G$ is a connected graph. Then the eigenpair $(\lambda_{K+1} (\bL),\bv_{K+1}(\bL))$ is a leading eigenpair of the matrix $\btL=\bL+\bV_{K} \bLambda_K \bV_K^T +\frac{s}{n} \bone_n \bone_n^T - s \bI$. In particular, if $\bL$ has distinct eigenvalues, then $(\lambda_{K+1}(\bL),\bv_{K+1}(\bL)) 
	=(\lambda_{1}(\btL)+s,\bv_{1}(\btL))$, and $\lambda_1(\btL)$ is the largest eigenvalue of $\btL$ in magnitude.
\end{theorem}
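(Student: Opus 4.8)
The plan is to exploit the fact that $\btL$ is symmetric and shares its entire eigenbasis with $\bL$, so that diagonalizing $\btL$ reduces to reading off, one direction at a time, how each term of the perturbation rescales a given eigenvector of $\bL$. First I would record the structural facts supplied by the proof of \textbf{Lemma} \ref{lem_connect_GL}: for a connected graph $(\lambda_1(\bL),\bv_1(\bL))=(0,\bone_n/\sqrt{n})$, the trace identity gives $\sum_{i=1}^n \lambda_i(\bL)=s$, and $\lambda_n(\bL)<s$. Writing $\bL=\sum_{i=1}^n \lambda_i(\bL)\bv_i(\bL)\bv_i^T(\bL)$ and using orthonormality of the $\bv_i(\bL)$, I would then evaluate $\btL\,\bv_j(\bL)$ for each $j$, the goal being to show every $\bv_j(\bL)$ remains an eigenvector of $\btL$ with an explicitly computable eigenvalue.

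The key computation is to apply $\btL$ to the relevant groups of eigenvectors, using that $\bV_K^T \bv_j(\bL)=\be_j$ for $j\le K$ and $\bV_K^T\bv_j(\bL)=\bzero$ for $j>K$, together with $\bone_n^T\bv_i(\bL)=0$ for $i\ge 2$. For $j>K$ both the deflation and the rank-one terms annihilate $\bv_j(\bL)$, so only $\bL$ and $-s\bI$ act and the eigenvalue becomes $\lambda_j(\bL)-s$, which is strictly negative since $\lambda_j(\bL)\le\lambda_n(\bL)<s$. For $2\le j\le K$ the rank-one term vanishes while $\bV_K\bLambda_K\bV_K^T$ contributes $s-\lambda_j(\bL)$, which cancels $\lambda_j(\bL)$ coming from $\bL$ against the $-s$ shift and collapses the eigenvalue to $0$. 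The direction $j=1$ is the delicate one, since the constant vector $\bv_1(\bL)=\bone_n/\sqrt{n}$ is acted on by both the rank-one term and the first column of the deflation; I would treat it separately.

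With the spectrum of $\btL$ in hand, the conclusion follows from a magnitude comparison. The eigenvalues from the uncomputed block are $\{\lambda_j(\bL)-s\}_{j=K+1}^n$, all lying in $(-s,0)$, and $|\lambda_j(\bL)-s|=s-\lambda_j(\bL)$ is maximized at $j=K+1$ because $\lambda_{K+1}(\bL)$ is the smallest eigenvalue in that block; distinctness of the eigenvalues of $\bL$ makes this maximizer unique. I would then verify that $s-\lambda_{K+1}(\bL)$ strictly dominates the magnitude of every eigenvalue assigned to the deflated directions, which identifies $\lambda_{K+1}(\bL)-s$ as the smallest (most negative) and largest-in-magnitude eigenvalue $\lambda_1(\btL)$ of $\btL$, with associated eigenvector $\bv_1(\btL)=\bv_{K+1}(\bL)$; rearranging $\lambda_1(\btL)=\lambda_{K+1}(\bL)-s$ then yields the stated identity.

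The hard part will be the bookkeeping for the constant eigenvector $\bv_1(\bL)=\bone_n/\sqrt{n}$, which is the one direction on which both $\frac{s}{n}\bone_n\bone_n^T$ and the first column of $\bV_K\bLambda_K\bV_K^T$ act at once (since $s-\lambda_1(\bL)=s$). Pinning down the net eigenvalue on this direction, and confirming that it — and indeed every deflated eigenvalue — has magnitude strictly below $s-\lambda_{K+1}(\bL)$, is precisely what guarantees that a power iteration converges to $\bv_{K+1}(\bL)$ rather than to an already-known direction. This is exactly the point where the connectivity hypothesis (through $\lambda_2(\bL)>0$ and $\lambda_n(\bL)<s$) and the explicit form of $\bv_1(\bL)$ become indispensable, and I would scrutinize this step most carefully before declaring $\lambda_1(\btL)$ the leading eigenpair.
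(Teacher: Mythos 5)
Your overall route --- showing that every eigenvector of $\bL$ stays an eigenvector of $\btL$, reading off the new eigenvalue on each direction, and then comparing magnitudes --- is exactly the paper's proof, just organized direction-by-direction instead of through a single decomposition identity. The parts you actually commit to are correct: eigenvalue $\lambda_j(\bL)-s$ on $\bv_j(\bL)$ for $j>K$, eigenvalue $0$ on $\bv_j(\bL)$ for $2\le j\le K$, and uniqueness of the magnitude-maximizer $j=K+1$ under distinct eigenvalues.

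The genuine gap is the one step you deferred, and it cannot be closed the way you promise. With the stated definition of $\bLambda_K$, its first diagonal entry is $s-\lambda_1(\bL)=s$ (since $\lambda_1(\bL)=0$), so using $\bL\bv_1(\bL)=\bzero_n$ and $\frac{s}{n}\bone_n\bone_n^T\bv_1(\bL)=s\,\bv_1(\bL)$ you get
\begin{align*}
\btL\,\bv_1(\bL)=\bzero_n+s\,\bv_1(\bL)+s\,\bv_1(\bL)-s\,\bv_1(\bL)=s\,\bv_1(\bL),
\end{align*}
an eigenvalue of magnitude $s$, which \emph{strictly exceeds} $s-\lambda_{K+1}(\bL)$ because connectivity forces $\lambda_{K+1}(\bL)\geq\lambda_2(\bL)>0$. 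Hence the verification you promise (``magnitude strictly below $s-\lambda_{K+1}(\bL)$'') fails under this literal bookkeeping, and the leading eigenpair would be $(s,\bv_1(\bL))$ rather than $(\lambda_{K+1}(\bL)-s,\bv_{K+1}(\bL))$. The paper's proof never faces this wall because its key identity (\ref{eqn_appex_thm_connect_GL_1}) lets $\bV_K\bLambda_K\bV_K^T$ inflate only the eigenvalues with indices $2,\ldots,K$; that is, it implicitly takes $[\bLambda_K]_{11}=0$, consistent with Algorithm \ref{algo_incremental_automated_clustering} (whose initialization is $\bV_1=\bLambda_1=\bO$), so the inflation of the trivial eigenpair is carried by $\frac{s}{n}\bone_n\bone_n^T$ alone and the net eigenvalue on $\bv_1(\bL)$ is $0+s-s=0$. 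Completing your proof therefore requires detecting this double count of $s\,\bv_1(\bL)\bv_1^T(\bL)$ and adopting that convention; what you flagged as delicate bookkeeping is in fact the point where the statement, read with $[\bLambda_K]_{11}=s$, is false rather than merely hard.
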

\begin{proof}
		From  \textbf{Lemma} \ref{lem_connect_GL},
	\begin{align}
	\label{eqn_appex_thm_connect_GL_1}
	&\bL+\frac{s}{n} \bone_n \bone_n^T+\bV_{K} \bLambda_K \bV_K^T \nonumber \\
	&=\sum_{i=K+1}^n \lambda_i(\bL) \bv_i(\bL)  \bv_i^T(\bL)+ \sum_{i=2}^K s \cdot \bv_i(\bL)  \bv_i^T(\bL)+\frac{s}{n} \bone_n \bone_n^T,
	\end{align}
	which is a valid eigenpair decomposition that can be seen by inflating the $K$ smallest eigenvalues of $\bL$ to $s$ with the originally paired eigenvectors. Using (\ref{eqn_appex_thm_connect_GL_1}) we obtain the eigenvalue decomposition of $\btL$ as
	\begin{align}
	\label{eqn_appex_thm_connect_GL_2}
	\btL&=\bL+\bV_{K} \bLambda_K \bV_K^T +\frac{s}{n} \bone_n \bone_n^T - s \bI \nonumber \\
	&=\sum_{i=K+1}^n (\lambda_i(\bL)-s) \bv_i(\bL)  \bv_i^T(\bL),
	\end{align}
	where we obtain the eigenvalue relation $\lambda_i(\btL)=\lambda_{i+K}(\bL)-s$.
	Furthermore, since $0 \leq \lambda_{K+1}(\bL) \leq \lambda_{K+2}(\bL) \leq \ldots \leq \lambda_{n}(\bL)$, we have 
	$ |\lambda_1(\btL)|=|\lambda_{K+1}(\bL)-s| \geq |\lambda_{K+2}(\bL)-s| \geq \ldots \geq |\lambda_{n}(\bL)-s|=|\lambda_{n-K}(\btL)|$. Therefore, the eigenpair $(\lambda_{K+1}(\bL),\bv_{K+1}(\bL))$ can be obtained by computing the leading eigenpair of $\btL$. In particular, if $\bL$ has distinct eigenvalues, then the leading eigenpair of $\btL$ is unique. Therefore, by (\ref{eqn_appex_thm_connect_GL_2}) we have the relation
	\begin{align}
	(\lambda_{K+1}(\bL),\bv_{K+1}(\bL))=(\lambda_{1}(\btL)+s,\bv_{1}(\btL)).
	\end{align}
\end{proof}

The next theorem describes an incremental eigenpair computation method when the graph $G$ is a disconnected graph with $\delta$ connected components.  The columns of the matrix $\bV_\delta$ are the $\delta$ smallest eigenvectors of $\bL$. Note that $\bV_\delta$ has a canonical representation that the nonzero entries of each column are a constant and their indices indicate the nodes in each connected component \cite{Luxburg07,CPY13GlobalSIP}, and the columns of $\bV_\delta$ are the $\delta$ smallest eigenvectors of $\bL$ with eigenvalue $0$ \cite{CPY13GlobalSIP}.
Since the $\delta$ smallest eigenpairs with the canonical representation are trivial by identifying the connected components in the graph,  we only focus on computing the $(K+1)$-th smallest eigenpair given $K$ smallest eigenpairs, where $K \geq \delta$. 
The columns of the matrix $\bV_{K,\delta}=[\bv_{\delta+1}{(\bL)},\bv_{\delta+2}{(\bL)},\ldots,\bv_K{(\bL)}]$ are the $(\delta+1)$-th to the $K$-th smallest eigenvectors of $\bL$ and the matrix $\bLambda_{K,\delta}=\diag (s-\lambda_{\delta+1}(\bL),s-\lambda_{\delta+2}(\bL),\ldots,s-\lambda_K(\bL))$ is the diagonal matrix with $\{s-\lambda_i(\bL)\}_{i=\delta+1}^K$ being its main diagonal. If $K=\delta$, $\bV_{K,\delta}$ and $\bLambda_{K,\delta}$ are defined as the matrix with all entries being zero, i.e., $\bO$.
\begin{theorem}{\textnormal{(disconnected graphs)}} 
	\label{thm_disconnect_GL}
	Assume that $G$ is a disconnected graph with $\delta \geq 2$ connected components, given $\bV_{K,\delta}$, $\bLambda_{K,\delta}$ and $K \geq \delta$, the eigenpair $(\lambda_{K+1} (\bL),\bv_{K+1}(\bL))$ is a leading eigenpair of the matrix $\btL=\bL+\bV_{K,\delta} \bLambda_{K,\delta} \bV_{K,\delta}^T \\ + s \bV_\delta \bV_{\delta}^T - s \bI$. In particular, if $\bL$ has distinct nonzero eigenvalues, then \\ $(\lambda_{K+1}(\bL),\bv_{K+1}(\bL))=(\lambda_{1}(\btL)+s,\bv_{1}(\btL))$, and $\lambda_1(\btL)$ is the largest eigenvalue of $\btL$ in magnitude.
\end{theorem}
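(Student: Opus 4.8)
The plan is to mirror the argument of \textbf{Theorem} \ref{thm_connect_GL}, substituting \textbf{Lemma} \ref{lem_disconnect_GL} for \textbf{Lemma} \ref{lem_connect_GL} and keeping careful track of the $\delta$-fold degenerate zero eigenvalue. First I would write the spectral decomposition of $\bL$, using the fact recalled just before the statement that the $\delta$ smallest eigenvalues of $\bL$ are all $0$ with eigenvectors given by the columns of $\bV_\delta$, so that $\bL=\sum_{i=\delta+1}^n \lambda_i(\bL)\bv_i(\bL)\bv_i^T(\bL)$. I would then expand the two rank-structured perturbation terms in the eigenbasis of $\bL$, namely $s\bV_\delta\bV_\delta^T=\sum_{i=1}^{\delta} s\,\bv_i(\bL)\bv_i^T(\bL)$ and $\bV_{K,\delta}\bLambda_{K,\delta}\bV_{K,\delta}^T=\sum_{i=\delta+1}^{K}(s-\lambda_i(\bL))\bv_i(\bL)\bv_i^T(\bL)$.

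Adding these to $\bL$ gives, in analogy with (\ref{eqn_appex_thm_connect_GL_1}),
\[
\bL+\bV_{K,\delta}\bLambda_{K,\delta}\bV_{K,\delta}^T+s\bV_\delta\bV_\delta^T=\sum_{i=K+1}^n \lambda_i(\bL)\bv_i(\bL)\bv_i^T(\bL)+\sum_{i=1}^{K} s\,\bv_i(\bL)\bv_i^T(\bL),
\]
which is a valid eigenpair decomposition obtained by inflating the $K$ smallest eigenvalues of $\bL$ (the $\delta$ zeros via the $\bV_\delta$ term and $\lambda_{\delta+1}(\bL),\ldots,\lambda_K(\bL)$ via the $\bV_{K,\delta}$ term) up to $s$, while leaving the remaining eigenpairs untouched. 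Subtracting $s\bI=\sum_{i=1}^n s\,\bv_i(\bL)\bv_i^T(\bL)$ then collapses the first $K$ terms to zero and shifts the rest, so that
\[
\btL=\sum_{i=K+1}^n (\lambda_i(\bL)-s)\bv_i(\bL)\bv_i^T(\bL),
\]
which yields the eigenvalue relation $\lambda_i(\btL)=\lambda_{i+K}(\bL)-s$ and shows the first $K$ eigenvectors now lie in the kernel of $\btL$.

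To finish, I would invoke the bound $\lambda_n(\bL)<s$ established in the proof of \textbf{Lemma} \ref{lem_disconnect_GL} to conclude that every quantity $\lambda_i(\bL)-s$ with $i\geq K+1$ is strictly negative, hence has magnitude strictly larger than the zero eigenvalues contributed by the inflated subspace. Since $\lambda_{K+1}(\bL)$ is the smallest among $\lambda_{K+1}(\bL),\ldots,\lambda_n(\bL)$, the value $\lambda_{K+1}(\bL)-s$ is the most negative eigenvalue of $\btL$ and therefore the one of largest magnitude, identifying $(\lambda_{K+1}(\bL),\bv_{K+1}(\bL))$ as a leading eigenpair with $(\lambda_{K+1}(\bL),\bv_{K+1}(\bL))=(\lambda_1(\btL)+s,\bv_1(\btL))$.

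The main obstacle is not the computation but getting the degeneracy bookkeeping right. Because the zero eigenvalue has multiplicity $\delta\geq 2$, the hypothesis must read ``distinct \emph{nonzero} eigenvalues'' rather than ``distinct eigenvalues'' as in the connected case; I would verify that since $K\geq\delta$, the target eigenvalue $\lambda_{K+1}(\bL)$ lies in the nonzero part of the spectrum, so that the distinct-nonzero assumption furnishes the strict gap $\lambda_{K+1}(\bL)<\lambda_{K+2}(\bL)$ needed for the leading eigenpair of $\btL$ to be unique. A related subtlety is that $\bV_\delta$ is determined only up to an orthonormal transformation of the degenerate zero-eigenspace, so I would note, via the \textbf{Remark} following \textbf{Corollary} \ref{cor_disconnect_NGL}, that the projector $\bV_\delta\bV_\delta^T$ is invariant under this choice, which keeps the inflation term $s\bV_\delta\bV_\delta^T$ well defined independently of how the kernel basis is selected.
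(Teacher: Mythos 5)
Your proof is correct and takes essentially the same route as the paper's: the published proof of \textbf{Theorem}~\ref{thm_disconnect_GL} simply invokes the derivation of \textbf{Theorem}~\ref{thm_connect_GL} together with \textbf{Lemma}~\ref{lem_disconnect_GL} to arrive at the identity $\btL=\sum_{i=K+1}^n (\lambda_i(\bL)-s)\bv_i(\bL)\bv_i^T(\bL)$ and then reads off the leading eigenpair, exactly the computation you have spelled out in full. Your added bookkeeping (that $K\geq\delta$ places $\lambda_{K+1}(\bL)$ in the nonzero spectrum, and that the projector $\bV_\delta\bV_\delta^T$ is invariant under orthonormal changes of kernel basis) is a correct elaboration of points the paper leaves implicit, not a different approach.
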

\begin{proof}
	First observe from (\ref{eqn_L_disconnect}) that $\bL$ has $\delta$ zero eigenvalues since each connected component contributes to exactly one zero eigenvalue for $\bL$.
	Following the same derivation procedure in the proof of \textbf{Theorem} \ref{thm_connect_GL} and using  \textbf{Lemma} \ref{lem_disconnect_GL}, we have
	\begin{align}
	\label{eqn_appex_thm_disconnect_GL_1}
	\btL&=\bL+\bV_{K,\delta} \bLambda_{K,\delta} \bV_{K,\delta}^T +s \bV_\delta \bV_{\delta}^T - s \bI \nonumber \\
	&=\sum_{i=K+1,K \geq \delta}^n (\lambda_i(\bL)-s) \bv_i(\bL)  \bv_i^T(\bL).
	\end{align}
	Therefore, the eigenpair $(\lambda_{K+1}(\bL),\bv_{K+1}(\bL))$ can be obtained by computing the leading eigenpair of $\btL$.
	If $\bL$ has distinct nonzero eigenvalues (i.e, $\lambda_{\delta+1}(\bL) < \lambda_{\delta+2}(\bL) < \ldots < \lambda_{n}(\bL) $), we obtain the relation 
	$ (\lambda_{K+1}(\bL),\bv_{K+1}(\bL))=(\lambda_{1}(\btL)+s,\bv_{1}(\btL))$.
\end{proof}

Following the same methodology for proving \textbf{Theorem} \ref{thm_connect_GL} and using \textbf{Corollary} \ref{cor_connect_NGL}, for normalized graph Laplacian matrices, let  $\bV_{K}=[\bv_1{(\bLN)},\bv_2{(\bLN)}, \\ \ldots,\bv_K{(\bLN)}]$ be the matrix with columns being the $K$ smallest eigenvectors of $\bLN$ and let $\bLambda_{K}=\diag (2-\lambda_1(\bLN),2-\lambda_2(\bLN),\ldots,2-\lambda_K(\bLN))$. The following corollary provides the basis for incremental eigenpair computation for normalized graph Laplacian matrix of connected graphs.

\begin{corollary}
	\label{cor_connect_NGL_incremental}
	For the normalized graph Laplacian matrix $\bLN$ of a connected graph $G$, given $\bV_{K}$ and $\bLambda_{K}$, the eigenpair $(\lambda_{K+1} (\bLN),\bv_{K+1}(\bLN))$ is a leading eigenpair of the matrix $\btLN=\bLN+\bV_{K} \bLambda_K \bV_K^T +\frac{2}{s} \bS^{\frac{1}{2}} \bone_n \bone_n^T \bS^{\frac{1}{2}} - 2 \bI$. In particular, if $\bLN$ has distinct eigenvalues, then $(\lambda_{K+1}(\bLN),$ 
	$\bv_{K+1}(\bLN))=(\lambda_{1}(\btLN)+2,\bv_{1}(\btLN))$, and $\lambda_1(\btLN)$ is the largest eigenvalue of $\btLN$ in magnitude.
\end{corollary}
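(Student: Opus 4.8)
The plan is to follow the proof of \textbf{Theorem} \ref{thm_connect_GL} essentially verbatim in structure, substituting the normalized ingredients supplied by \textbf{Corollary} \ref{cor_connect_NGL}: the trivial smallest eigenpair of $\bLN$ is $(\lambda_1(\bLN),\bv_1(\bLN)) = (0, \frac{\bS^{\frac{1}{2}}\bone_n}{\sqrt{s}})$ because $\bLN \bS^{\frac{1}{2}}\bone_n = \bS^{-\frac{1}{2}}\bL\bone_n = \bzero_n$, and the whole spectrum is confined to $0 \le \lambda_1(\bLN) \le \cdots \le \lambda_n(\bLN) \le 2$ by \cite{Merris94}, with $\lambda_2(\bLN) > 0$ since $G$ is connected. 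The role played by $s$ in the unnormalized case is now played by the constant $2$, and the role played by $\frac{s}{n}\bone_n\bone_n^T$ is played by the rank-one term $\frac{2}{s}\bS^{\frac{1}{2}}\bone_n\bone_n^T\bS^{\frac{1}{2}}$. A useful first step is to record that this rank-one term is exactly $2\,\bv_1(\bLN)\bv_1^T(\bLN)$, since $(\bS^{\frac{1}{2}}\bone_n)^T\bS^{\frac{1}{2}}\bone_n = \bone_n^T\bS\bone_n = s$; this makes transparent that it inflates the single zero eigenvalue up to $2$ without perturbing any other eigenpair, because $\bv_1^T(\bLN)\bv_i(\bLN) = 0$ for $i \ge 2$.

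I would then assemble the spectral decomposition of $\btLN$. The deflation term $\bV_K\bLambda_K\bV_K^T$ raises the small nonzero eigenvalues $\lambda_2(\bLN),\ldots,\lambda_K(\bLN)$ up to $2$ while leaving their eigenvectors fixed, and the rank-one term raises the zero eigenvalue to $2$; together they send exactly the $K$ smallest eigenvalues of $\bLN$ to the common value $2$ and leave $\lambda_i(\bLN)$ and $\bv_i(\bLN)$ untouched for $i \ge K+1$. Subtracting $2\bI$ uniformly lowers the spectrum by $2$, giving
\begin{align}
\btLN = \sum_{i=K+1}^n \big(\lambda_i(\bLN) - 2\big)\, \bv_i(\bLN)\bv_i^T(\bLN),
\end{align}
equivalently $\lambda_i(\btLN) = \lambda_{i+K}(\bLN) - 2$ with the same eigenvectors.

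Finally I would read off the leading eigenpair. Because $0 \le \lambda_{K+1}(\bLN) \le \cdots \le \lambda_n(\bLN) \le 2$, all the surviving shifted eigenvalues are nonpositive and obey $|\lambda_{K+1}(\bLN)-2| \ge \cdots \ge |\lambda_n(\bLN)-2|$, so the eigenvalue of $\btLN$ of largest magnitude is $\lambda_{K+1}(\bLN)-2$, carried by $\bv_{K+1}(\bLN)$; hence computing the leading eigenpair of $\btLN$ recovers $(\lambda_{K+1}(\bLN),\bv_{K+1}(\bLN))$ through $\lambda_{K+1}(\bLN) = \lambda_1(\btLN) + 2$, and distinctness of the eigenvalues makes this leading eigenpair unique. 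I expect the only genuinely delicate point to be the bookkeeping at the trivial eigenpair: one must ensure that $\bv_1(\bLN)$ is inflated to $2$ exactly once by the combination of $\bV_K\bLambda_K\bV_K^T$ and the rank-one term, so that after the $-2\bI$ shift it lands at $0$ rather than remaining dominant; everything else is the mechanical normalized analog of \textbf{Theorem} \ref{thm_connect_GL}, with the bound $\lambda_n(\bLN)\le 2$ being precisely what guarantees that $\lambda_{K+1}(\bLN)-2$ dominates in magnitude rather than some interior eigenvalue.
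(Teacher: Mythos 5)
Your proposal is correct and takes essentially the same route as the paper, whose entire proof of this corollary is the remark that one repeats the argument of Theorem~\ref{thm_connect_GL} using the ingredients of Corollary~\ref{cor_connect_NGL} (trivial eigenpair $(0,\bS^{\frac{1}{2}}\bone_n/\sqrt{s})$, spectrum confined to $[0,2]$, and the rank-one inflation $\frac{2}{s}\bS^{\frac{1}{2}}\bone_n\bone_n^T\bS^{\frac{1}{2}} = 2\,\bv_1(\bLN)\bv_1^T(\bLN)$)—exactly what you carry out. The one delicate point you flag, that $\bv_1(\bLN)$ must be inflated to $2$ exactly once, is handled by the same convention the paper uses in the proof of Theorem~\ref{thm_connect_GL} and in Algorithm~\ref{algo_incremental_automated_clustering} (initialization $\bV_1=\bLambda_1=\bO$): the deflation term $\bV_K\bLambda_K\bV_K^T$ is understood to act only on the nontrivial eigenpairs, so your reading agrees with the intended one.
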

\begin{proof}
	The proof procedure is similar to the proof of \textbf{Theorem} \ref{thm_connect_GL} by using \textbf{Corollary} \ref{cor_connect_NGL}.
\end{proof}

For disconnected graphs with $\delta$ connected components,  
let  $\bV_{K,\delta}=[\bv_{\delta+1}{(\bLN)}, \\ \bv_{\delta+2}{(\bLN)},\ldots,\bv_K{(\bLN)}]$ with columns being the $(\delta+1)$-th to the $K$-th smallest eigenvectors of $\bLN$ and let $\bLambda_{K,\delta}=\diag (2-\lambda_{\delta+1}(\bLN),2-\lambda_{\delta+2}(\bLN),\ldots,2-\lambda_K(\bLN))$.
Based on \textbf{Corollary} \ref{cor_disconnect_NGL}, the following corollary provides an incremental eigenpair computation method for normalized graph Laplacian matrix of disconnected graphs.

\begin{corollary}
	\label{cor_disconnect_NGL_incremental}
	For the normalized graph Laplacian matrix $\bLN$ of a disconnected graph $G$ with $\delta \geq 2$ connected components, given $\bV_{K,\delta}$, $\bLambda_{K,\delta}$ and $K \geq \delta$,
	the eigenpair $(\lambda_{K+1} (\bLN),\bv_{K+1}(\bLN))$ is a leading eigenpair of the matrix $\btLN=\bLN+\bV_{K,\delta} \bLambda_{K,\delta} \bV_{K,\delta}^T +\frac{2}{s} \bS^{\frac{1}{2}} \bone_n \bone_n^T \bS^{\frac{1}{2}} - 2 \bI$. In particular, if $\bLN$ has distinct eigenvalues, then $(\lambda_{K+1}(\bLN),\bv_{K+1}(\bLN))$
	$=(\lambda_{1}(\btLN)+2,\bv_{1}(\btLN))$, and $\lambda_1(\btLN)$ is the largest eigenvalue of $\btLN$ in magnitude.
\end{corollary}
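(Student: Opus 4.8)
The plan is to reproduce the deflation-and-shift argument of \textbf{Theorem} \ref{thm_disconnect_GL}, replacing the unnormalized ingredients of \textbf{Lemma} \ref{lem_disconnect_GL} with the normalized ones supplied by \textbf{Corollary} \ref{cor_disconnect_NGL}. First I would write the spectral decomposition $\bLN=\sum_{i=1}^n \lambda_i(\bLN)\,\bv_i(\bLN)\,\bv_i^T(\bLN)$ and invoke the structural fact that a graph with $\delta$ connected components forces $\lambda_1(\bLN)=\cdots=\lambda_\delta(\bLN)=0$, so that the sum effectively runs from $i=\delta+1$ and the $\delta$-dimensional null space is spanned by the columns of $\bV_\delta$.

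Next I would check that each additive term of $\btLN$ inflates a disjoint block of the spectrum while fixing the corresponding eigenvectors. By orthonormality of $\{\bv_i(\bLN)\}$, the term $\bV_{K,\delta}\bLambda_{K,\delta}\bV_{K,\delta}^T=\sum_{i=\delta+1}^K (2-\lambda_i(\bLN))\,\bv_i(\bLN)\,\bv_i^T(\bLN)$ raises exactly the eigenvalues $\lambda_{\delta+1}(\bLN),\ldots,\lambda_K(\bLN)$ to $2$, the null-space inflation term raises all $\delta$ zero eigenvalues to $2$ (this is the content of \textbf{Corollary} \ref{cor_disconnect_NGL}), and neither term disturbs any eigenpair of index $\geq K+1$. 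Subtracting $2\bI$ then collapses every inflated direction to the eigenvalue $0$ and shifts the surviving tail, giving the target decomposition
\begin{equation}
\btLN=\sum_{i=K+1}^n \big(\lambda_i(\bLN)-2\big)\,\bv_i(\bLN)\,\bv_i^T(\bLN).
\end{equation}

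The conclusion is then a magnitude comparison against the eigenvalue bound $0\leq \lambda_{K+1}(\bLN)\leq\cdots\leq\lambda_n(\bLN)\leq 2$ from \cite{Merris94}. Each surviving eigenvalue $\lambda_i(\bLN)-2$ is nonpositive, and $|\lambda_{K+1}(\bLN)-2|=2-\lambda_{K+1}(\bLN)\geq 2-\lambda_i(\bLN)=|\lambda_i(\bLN)-2|$ for every $i\geq K+1$, so $\lambda_1(\btLN)=\lambda_{K+1}(\bLN)-2$ is the eigenvalue of largest magnitude. When $\bLN$ has distinct eigenvalues this leading eigenpair is unique, yielding $(\lambda_{K+1}(\bLN),\bv_{K+1}(\bLN))=(\lambda_1(\btLN)+2,\bv_1(\btLN))$.

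The step I expect to demand the most care is the handling of the $\delta$-fold degenerate null space. The shift by $-2\bI$ only produces the clean form above if \emph{all} $\delta$ zero eigenvalues have first been lifted to $2$; any zero eigenvalue left un-inflated would become $-2$ after the shift and, having magnitude $2\geq |\lambda_{K+1}(\bLN)-2|$, would tie with or dominate the intended leading eigenpair and break the identification. This is exactly why the null-space inflation must act on the entire span $\bV_\delta$, as in \textbf{Corollary} \ref{cor_disconnect_NGL}, rather than on a single null vector such as $\bS^{\frac{1}{2}}\bone_n/\sqrt{s}$ (which spans only one of the $\delta$ null directions when $\delta\geq 2$); confirming that the chosen perturbation covers the full null space is the crux of the proof.
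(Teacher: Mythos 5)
Your proof is correct and follows exactly the route the paper intends: the paper's own proof of this corollary is a one-line reference to the argument of Theorem~\ref{thm_disconnect_GL} with Lemma~\ref{lem_disconnect_GL} replaced by Corollary~\ref{cor_disconnect_NGL}, which is precisely the inflate-then-shift deflation you write out.

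The concern you raise in your last paragraph deserves to be stated more strongly, however: it is not merely the delicate step, it shows that the corollary as printed is wrong, and what your proof establishes is the corrected statement rather than the literal one. The printed perturbation $\frac{2}{s}\bS^{\frac{1}{2}}\bone_n\bone_n^T\bS^{\frac{1}{2}}$ is rank one, so for $\delta\geq 2$ it inflates only the single null direction $\bS^{\frac{1}{2}}\bone_n/\sqrt{s}$ and leaves a $(\delta-1)$-dimensional subspace of the null space of $\bLN$ untouched; after subtracting $2\bI$ that subspace carries eigenvalue $-2$. Since $K\geq\delta$ forces $\lambda_{K+1}(\bLN)>0$ (a graph with exactly $\delta$ components has exactly $\delta$ zero eigenvalues), we have $|\lambda_{K+1}(\bLN)-2|<2$, so the leading eigenpairs of the printed $\btLN$ lie in that residual null space, not at $\bv_{K+1}(\bLN)$. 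The inflation term must be $2\bV_\delta\bV_\delta^T$ --- the normalized analogue of $s\bV_\delta\bV_\delta^T$ in Theorem~\ref{thm_disconnect_GL} and exactly the perturbation supplied by Corollary~\ref{cor_disconnect_NGL}, which the paper's proof cites; the printed term appears to have been carried over from the connected-graph case of Corollary~\ref{cor_connect_NGL_incremental}. With that replacement, your spectral decomposition, magnitude comparison, and uniqueness argument are all sound.
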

\begin{proof}
	The proof procedure is similar to the proof of \textbf{Theorem} \ref{thm_disconnect_GL} by using \textbf{Corollary} \ref{cor_disconnect_NGL}.
\end{proof}

\subsection{Computational complexity analysis}
\label{subsec_computation_analysis}
Here we analyze the computational complexity of Incremental-IO and compare it with the batch computation method.
Incremental-IO utilizes the existing $K$ smallest eigenpairs
to compute the $(K+1)$-th smallest eigenpair as described in Sec.
\ref{subsec_incremental_method}, whereas the batch computation method 
recomputes all $K$ smallest eigenpairs for each value of $K$. Both methods can
be easily implemented via well-developed numerical computation packages such as
ARPACK \cite{lehoucq1998arpack}.

Following the analysis in \cite{kuczynski1992estimating}, the average relative
error of the leading eigenvalue from the Lanczos algorithm
\cite{lanczos1950iteration} has an upper bound of the order   $O \lb \frac{(\ln
	n)^2}{t^2} \rb$, where $n$ is the number of nodes in the graph and $t$ is the
number of iterations for Lanczos algorithm. Therefore, when one sequentially
computes from $k=2$ to $k=K$ smallest eigenpairs, for Incremental-IO the upper bound on the average relative error of $K$
smallest eigenpairs is  $O \lb \frac{K (\ln n)^2}{t^2} \rb$ since in each
increment computing the corresponding eigenpair can be transformed to a leading
eigenpair computation process as described in Sec. \ref{subsec_incremental_method}. On the other hand, for the batch computation
method, the upper bound on the average relative error of $K$ smallest
eigenpairs is  $O \lb \frac{K^2 (\ln n)^2}{t^2} \rb$ since for the $k$-th
increment ($k \leq K$) it needs to compute all $k$ smallest eigenpairs from
scratch. These results also imply that to reach the same average relative error
$\epsilon$ for sequential computation of $K$ smallest eigenpairs, Incremental-IO requires $\Omega \lb \sqrt{\frac{K}{\epsilon}}  \ln n  \rb$
iterations, whereas the batch method requires $\Omega \lb \frac{K\ln
	n}{\sqrt{{\epsilon}}} \rb$  iterations. It is difficult to analyze the computational complexity of Lanczos-IO, as its convergence results heavily depend on the quality of previously generated Lanczos vectors.

\subsection{Incremental-IO via power iterations}

Using the theoretical results of Incremental-IO developed in Theorem \ref{thm_connect_GL} and Corollary \ref{cor_connect_NGL_incremental}, we illustrate how Incremental-IO can be implemented via power iterations for connected graphs, which is summarized in Algorithm \ref{algo_incremental_connected}. The case of disconnected graphs can be implemented in a similar way using Theorem \ref{thm_disconnect_GL} and Corollary \ref{cor_disconnect_NGL_incremental}.

\begin{algorithm}[t]
	\caption{Incremental-IO via power iterations for connected graphs}
	\label{algo_incremental_connected}
	\begin{algorithmic}
		\State \textbf{Input:} $K$ smallest eigenpairs $\{\lambda_k,\bv_k\}_{k=1}^K$ of $\bL$ (or $\bLN$), \# of power iterations $h$, sum of nodal strength $s$, diagonal nodal strength matrix $\bS$
		\State \textbf{Output:} $(K+1)$-th smallest eigenpair  $(\lambda_{K+1},\bv_{K+1})$
		\State \textbf{Initialization:} a random vector $\bx$  in $\mathbb{R}^n$ with unit norm $\|\bx\|_2=1$. 
		\For{$i=1$ to $h$}
		\State (Unnormalized graph Laplacian matrix $\bL$)
		\State U1. $\by = \bL \bx + \sum_{k=1}^K \lambda_k (\bv_k^T \bx) \bv_k + \frac{s}{n} (\bone_n^T \bx)  \bone_n - s \bx$.
		\State U2. $\bx = \frac{\by}{\|\by\|_2}$.
		\State (Normalized graph Laplacian matrix $\bLN$)	
		\State N1.  $\by = \bLN \bx + \sum_{k=1}^K \lambda_k (\bv_k^T \bx) \bv_k +\frac{2}{s} (\bone_n^T \bS^{\frac{1}{2}} \bx) \bS^{\frac{1}{2}} \bone_n- 2 \bx$.	
		\State N2.  $\bx = \frac{\by}{\|\by\|_2}$.
		\EndFor	
		\State (Unnormalized graph Laplacian matrix $\bL$)				
		\State U3. $(\lambda_{K+1},\bv_{K+1})=(s+\bx^T \by,\bx)$.
		\State (Normalized graph Laplacian matrix $\bLN$)		
		\State N3. $(\lambda_{K+1},\bv_{K+1})=(2+\bx^T \by,\bx)$.		
	\end{algorithmic}
\end{algorithm}

In essence, since in Theorem \ref{thm_connect_GL} we proved that  given the $K$ smallest eigenpairs of $\bL$, the leading eigenvector of $\btL$ is the $(K+1)$-th smallest eigenvector of the unnormalized graph Laplacian matrix $\bL$, one can apply the Rayleigh quotient iteration method \cite{HornMatrixAnalysis} to approximate the leading eigenpairs\ of $\btL$ via power iterations. The same argument applies to normalized graph Laplacian matrix $\bLN$ via Corollary \ref{cor_connect_NGL_incremental}.
 In particular, $\bL$ (or $\bLN$) has $m+n$ nonzero entries, where $n$ and $m$ are the number of nodes and edges in the graph respectively.
Therefore, in Algorithm \ref{algo_incremental_connected} the implementation of $h$ power iterations requires $O(h(m+Kn))$ operations, and the data storage requires $O(m+Kn)$ space.

\section{Application: User-Guided Spectral Clustering with Incremental-IO }
Based on the developed incremental eigenpair computation method (Incremental-IO) in Sec. \ref{sec_incremental_eigenpair}, we propose an
incremental algorithm for user-guided spectral clustering as summarized in
\textbf{Algorithm} \ref{algo_incremental_automated_clustering}. This algorithm
sequentially computes the smallest eigenpairs via Incremental-IO (steps 1-3) for spectral clustering and provides
a sequence of clusters with the values of user-specified clustering metrics.

The input graph is a connected undirected weighted  graph $\bW$ and we convert
it to the reduced weighted graph $\bWN=\bS^{-\frac{1}{2}} \bW
\bS^{-\frac{1}{2}}$ to alleviate the effect of imbalanced edge weights. The
entries of $\bWN$ are properly normalized by the nodal strength such that
$[\bWN]_{ij}=\frac{[\bW]_{ij}}{\sqrt{s_i \cdot s_j}}$. We then obtain the graph
Laplacian matrix $\bL$ for $\bWN$ and incrementally compute the eigenpairs of
$\bL$ via Incremental-IO (steps 1-3)
until the user decides to stop further computation.

Starting from $K=2$ clusters, the algorithm incrementally computes the $K$-th
smallest eigenpair $(\lambda_{K}(\bL),\bv_{K}(\bL))$ of $\bL$ with the
knowledge of the previous $K-1$ smallest eigenpairs via \textbf{Theorem}
\ref{thm_connect_GL} and obtains matrix $\bV_K$ containing $K$ smallest
eigenvectors. By viewing each row in $\bV_K$ as a $K$-dimensional vector,
K-means clustering is implemented to separate the rows in $\bV_K$ into $K$
clusters. For each increment, the identified $K$ clusters are denoted by
$\{\hG_k\}_{k=1}^K$, where $\hG_k$ is a graph partition with $\hn_k$ nodes and
$\hatm_k$ edges.

\begin{algorithm}[t]
	\caption{Incremental algorithm for user-guided spectral clustering using Incremental-IO (steps 1-3)}
	\label{algo_incremental_automated_clustering}
	\begin{algorithmic}
		\State \textbf{Input:}  connected undirected weighted graph $\bW$, user-specified clustering metrics  
		\State \textbf{Output:} $K$ clusters $\{\hG_k\}_{k=1}^K$
		\State \textbf{Initialization:} $K=2$. $\bV_1=\bLambda_1=\bO$. Flag $=1$. $\bS=\diag(\bW \bone_n)$. $\bWN=\bS^{-\frac{1}{2}} \bW \bS^{-\frac{1}{2}}$. 
		\State ~~~~~~~~~~~~~~~~~~~~~$\bL=\diag(\bWN \bone_n)-\bWN$. $s=\bone_n^T \bWN \bone_n$.
		\While{Flag$=1$}
		\State 1. $\btL=\bL+\bV_{K-1} \bLambda_{K-1} \bV_{K-1}^T +\frac{s}{n} \bone_n \bone_n^T - s \bI$.
		\State 2. Compute the leading eigenpair $(\lambda_{1}(\btL),\bv_{1}(\btL))$ and set
		\State  ~~~~$(\lambda_{K}(\bL),\bv_{K}(\bL))=(\lambda_{1}(\btL)+s,\bv_{1}(\btL))$.
		\State 3. Update $K$ smallest eigenpairs of $\bL$ by $\bV_K=[\bV_{K-1}~\bv_K]$  
		\State ~~~~and $[\bLambda_K]_{KK}=s-\lambda_{K}(\bL)$.
		\State 4. Perform K-means clustering on the rows of $\bV_K$
		to obtain $K$ clusters $\{\hG_k\}_{k=1}^K$.
		
		\State 5. Compute user-specified clustering metrics.
		\If{user decides to stop}
		Flag$=0$ 
		\State Output  $K$ clusters $\{\hG_k\}_{k=1}^K$
		\Else
		\State Go back to step 1 with $K=K+1$.
		\EndIf
		\EndWhile
	\end{algorithmic}
\end{algorithm}

In addition to incremental computation of smallest eigenpairs, for each
increment the algorithm can also be used to update clustering metrics 
such as normalized cut, modularity, and cluster size distribution, in order to
provide users with clustering information to stop the incremental computation
process. The incremental computation algorithm allows users to
efficiently track the changes in clusters as the number $K$ of hypothesized clusters increases.

Note that \textbf{Algorithm} \ref{algo_incremental_automated_clustering} is
proposed for connected graphs and their corresponding unnormalized graph
Laplacian matrices. The algorithm can be easily adapted to disconnected graphs
or normalized graph Laplacian matrices by modifying steps 1-3 based on the developed results in
\textbf{Theorem} \ref{thm_disconnect_GL}, \textbf{Corollary}
\ref{cor_connect_NGL_incremental} and \textbf{Corollary}
\ref{cor_disconnect_NGL_incremental}.

\section{Implementation}

We implement the proposed incremental eigenpair computation method \\(Incremental-IO) using Matlab
R2015a's ``eigs'' function, which is based on ARPACK package
\cite{lehoucq1998arpack}. Note that this function takes a parameter $K$ and
returns $K$ leading eigenpairs of the given matrix.
The $eigs$ function is
implemented in Matlab with a Lanczos algorithm that computes the leading eigenpairs
(the implicitly-restarted Lanczos method \cite{calvetti1994implicitly}). This
Matlab function iteratively generates Lanczos vectors starting from an initial
vector (the default setting is a random vector) with restart.  Following
\textbf{Theorem} \ref{thm_connect_GL}, Incremental-IO works by
sequentially perturbing the graph Laplacian matrix $\bL$ with a particular
matrix and computing the leading eigenpair of the perturbed matrix $\btL$ (see
\textbf{Algorithm} \ref{algo_incremental_automated_clustering}) by calling
$eigs(\btL, 1)$.   For fair comparison to other two methods (Lanczos-IO and the batch computation method), we select the $eigs$ function over the power iteration method for implementing Incremental-IO, as the latter method provides approximate eigenpair computation and its approximation error depends on the number of power iterations, while the former guarantees $\epsilon$-accuracy for each compared method.
For the batch
computation method, we use $eigs(\bL, K)$ to compute the desired $K$ eigenpairs
from scratch as $K$ increases. 

\begin{algorithm}[t]
	\caption{Lanczos method of Increasing Orders (Lanczos-IO)}
	\label{algo_LanczosIO}
	\begin{algorithmic}
		\State \textbf{Input:} real symmetric matrix $\bM$, $\#$ of initial Lanczos vectors $Z_{ini}$, $\#$ of augmented Lanczos vectors $Z_{aug}$ 
		\State \textbf{Output:} $K$ leading eigenpairs $\{\lambda_i,\bv_i\}_{i=1}^K$ of $\bM$
		\State \textbf{Initialization:} Compute $Z_{ini}$ Lanczos vectors as columns of $\bQ$ and the corresponding tridiagonal matrix $\bT$ of $\bM$. Flag $=1$.  $K=1$. $Z=Z_{ini}$.
		
		\While{Flag$=1$}		
		\State 1. Obtain the $K$ leading eigenpairs $\{t_i,\bu_i\}_{i=1}^K$ of $\bT$. $\bU=[\bu_1,\ldots,\bu_K]$.
		\State 2. Residual error $=|\bT(Z-1,Z) \cdot \bU(Z,K)|$
		\While{Residual error $>$ Tolerance}	
		\State 2-1. $Z=Z+Z_{aug}$
		\State 2-2. Based on $\bQ$ and $\bT$, compute the next $Z_{aug}$ Lanczos vectors as columns  of
		\State ~~~~~~$\bQ_{aug}$ and  the augmented tridiagonal matrix $\bT_{aug}$
		\State 2-3. $\bQ \leftarrow [\bQ~\bQ_{aug}]$ and $\bT \leftarrow \begin{bmatrix}
		\bT & \bO \\
		\bO  & \bT_{aug}
		\end{bmatrix}$ 
		\State 2-4. Go back to step 1
		\EndWhile 
		\State 3. $\{\lambda_i\}_{i=1}^K=\{t_i\}_{i=1}^K$.~ $[\bv_1,\ldots,\bv_K]=\bQ \bU$.
		\If{user decides to stop}
		Flag$=0$ 
		\State Output  $K$ leading eigenpairs $\{\lambda_i,\bv_i\}_{i=1}^K$
		\Else
		\State Go back to step 1 with $K=K+1$.
		\EndIf
		\EndWhile
	\end{algorithmic}
\end{algorithm}

For implementing Lanczos-IO, we extend the Lanczos algorithm of fixed order
($K$ is fixed) using the PROPACK package \cite{larsen2000computing}.
As we have stated earlier, Lanczos-IO works by storing all previously generated
Lanczos vectors and using them to compute new Lanczos vectors for each
increment in $K$. The general procedure of computing $K$ leading
eigenpairs of a real symmetric matrix $\bM$ using Lanczos-IO is described in
\textbf{Algorithm} \ref{algo_LanczosIO}.  The operation of Lanczos-IO is
similar to the explicitly-restarted Lanczos algorithm \cite{wu2000thick}, which
restarts the computation of Lanczos vectors with a subset of previously
computed Lanczos vectors.  Note that the Lanczos-IO consumes additional memory
for storing all previously computed Lanczos vectors when compared with the
proposed incremental method in \textbf{Algorithm}
\ref{algo_incremental_automated_clustering}, since the $eigs$ function uses the
implicitly-restarted Lanczos method that spares the need of storing Lanczos
vectors for restart.

To apply Lanczos-IO to spectral clustering of increasing orders, we can set
$\bM=\bL+\frac{s}{n} \bone_n \bone_n^T - s \bI$ to obtain the smallest
eigenvectors of $\bL$.   Throughout the experiments the parameters in
\textbf{Algorithm} \ref{algo_LanczosIO} are set to be $Z_{ini}=20$ and
$\textnormal{Tolerence}=\epsilon \cdot \|\bM \|$, where $\epsilon$ is the
machine precision, $\|\bM\|$ is the operator norm of $\bM$, and these settings
are consistent with the settings used in $eigs$ function \cite{lehoucq1998arpack}.
The number of augmented Lanczos vectors $Z_{aug}$ is set to be $10$, and the effect of $Z_{aug}$  on the computation time is discussed in Sec. \ref{subsec_time_real}.
The Matlab implementation of the aforementioned batch method, Lanczos-IO, and Incremental-IO are
available from the first author's personal website: https://sites.google.com/site/pinyuchenpage/codes

\section{Experimental Results}
\label{sec:ex}
In this section we perform several experiments: first, compare the computation time between
Incremental-IO, Lanczos-IO, and the batch method; second, numerically verify the accuracy of Incremental-IO;
third, demonstrate the
usages of Incremental-IO for user-guided spectral
clustering. For the first experiment, we generate synthetic Erdos-Renyi random
graphs of various sizes. For the second experiment, we compare the consistency of eigenpairs obtained from Incremental-IO and the batch method.
For the third experiment, we use six popular graph
datasets as listed in Table \ref{tab:statistic}. The descriptions of these datasets are as follows.

\begin{enumerate} 

\item Minnesota Road
Map dataset is a graphical representation of roads in Minnesota, where nodes
represent road intersections and edges represent roads. 

\item Power Grid is a graph
representing the topology of Western Power Grid of USA, where nodes represent
power stations and edges represent power lines.

\item CLUTO is a synthetic
dataset of two-dimensional data points for density-based clustering.

\item Swiss Roll is a synthetic dataset designed
for manifold learning tasks. The data points lies in a two-dimensional manifold of a
three-dimensional space.

\item Youtube is a graph representing  social interactions of users on Youtube, where nodes are users and
edges are existence of interactions.

\item BlogCatalog is a social friendship graph among bloggers, where nodes represent bloggers and edges
represent their social interactions.

\end{enumerate}

\begin{table}[]
	\centering	
	\caption{Statistics of Datasets}
	\label{tab:statistic}
	\begin{tabular}{llll}
		\hline
		Dataset                                                   & Nodes                                                          & Edges                                                          & Density \\ \hline
		\begin{tabular}[c]{@{}l@{}}Minnesota \\ Road Map\footnotemark\end{tabular} & \begin{tabular}[c]{@{}l@{}}2640 \\ intersections\end{tabular}  & \begin{tabular}[c]{@{}l@{}}3302 \\ roads\end{tabular}          & 0.095\% \\ \hline
		\begin{tabular}[c]{@{}l@{}}Power \\ Grid\footnotemark\end{tabular}     & \begin{tabular}[c]{@{}l@{}}4941 \\ power stations\end{tabular} & \begin{tabular}[c]{@{}l@{}}6594 \\ power lines\end{tabular}    & 0.054\% \\ \hline
		CLUTO\footnotemark                                                     & \begin{tabular}[c]{@{}l@{}}7674 \\ data points\end{tabular}    & \begin{tabular}[c]{@{}l@{}}748718 \\ kNN edges\end{tabular}    & 2.54\%  \\ \hline
		\begin{tabular}[c]{@{}l@{}}Swiss \\ Roll\footnotemark\end{tabular}     & \begin{tabular}[c]{@{}l@{}}20000 \\ data points\end{tabular}   & \begin{tabular}[c]{@{}l@{}}81668 \\ kNN edges\end{tabular}     & 0.041\% \\ \hline
		Youtube\footnotemark                                                   & \begin{tabular}[c]{@{}l@{}}13679 \\ users\end{tabular}         & \begin{tabular}[c]{@{}l@{}}76741 \\ interactions\end{tabular}  & 0.082\% \\ \hline
		BlogCatalog\footnotemark                                               & \begin{tabular}[c]{@{}l@{}}10312 \\ bloggers\end{tabular}      & \begin{tabular}[c]{@{}l@{}}333983 \\ interactions\end{tabular} & 0.63\%  \\ \hline
	\end{tabular}
\end{table}

\footnotetext[1]{{\scriptsize http://www.cs.purdue.edu/homes/dgleich/nmcomp/matlab/minnesota}}
\footnotetext[2]{{\scriptsize http://www-personal.umich.edu/~mejn/netdata}}
\footnotetext[3]{{\scriptsize http://glaros.dtc.umn.edu/gkhome/views/cluto}}
\footnotetext[4]{{\scriptsize http://isomap.stanford.edu/datasets.html}}
\footnotetext[5]{{\scriptsize http://socialcomputing.asu.edu/datasets/YouTube}}
\footnotetext[6]{{\scriptsize http://socialcomputing.asu.edu/datasets/BlogCatalog}}

Among all six datasets, Minnesota Road Map
and Power Grid are unweighted graphs, and the others are weighted graphs. For
CLUTO and Swiss Roll we use $k$-nearest neighbor (KNN) algorithm to generate
similarity graphs, where the parameter $k$ is the minimal value that makes the
graph connected, and the similarity is measured by Gaussian kernel with unity
bandwidth \cite{Zaki.Jr:14}.

\subsection{Comparison of computation time on simulated	graphs}
To illustrate the advantage of Incremental-IO, we compare its computation time with
the other two methods, the batch method and Lanczos-IO, for varying order $K$ and varying graph size $n$.  The Erdos-Renyi
random graphs that we build are used for this comparison. Fig.
\ref{Fig_incremental_computation} (a) shows the computation time of
Incremental-IO, Lanczos-IO, and the batch computation method for sequentially
computing from $K=2$ to $K=10$ smallest eigenpairs.  It is observed that the
computation time of Incremental-IO and Lanczos-IO grows linearly as $K$
increases, whereas the computation time of the batch method grows superlinearly
with $K$. 

Fig.  \ref{Fig_incremental_computation} (b) shows the computation time of all three
methods with respect to different graph size $n$.  It is observed that the
difference in computation time between the batch method and the two incremental
methods grow polynomially as $n$ increases, which suggests that in this experiment Incremental-IO and Lanczos-IO are more efficient than the
batch computation method, especially for large graphs. It is worth noting that
although Lanczos-IO has similar performance in computation time as Incremental-IO, it requires additional memory allocation for storing all
previously computed Lanczos vectors.

\begin{figure*}[t]
	\centering
	\begin{subfigure}[b]{0.5\textwidth}
		\includegraphics[width=\textwidth]{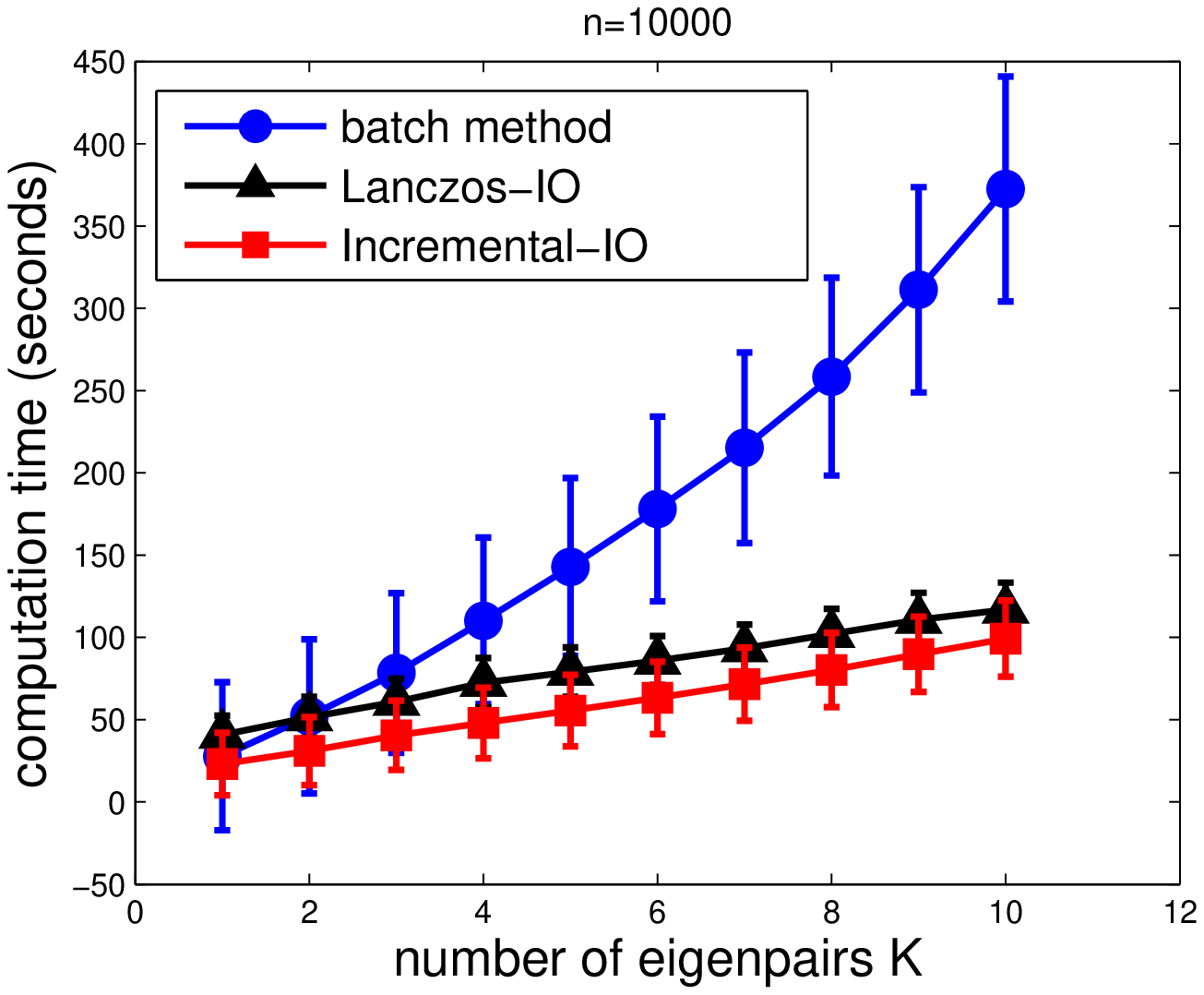}
		\caption{~}
		\label{Fig_incremental_computation_K}
	\end{subfigure}%
	\centering
	\begin{subfigure}[b]{0.5\textwidth}
		\includegraphics[width=\textwidth]{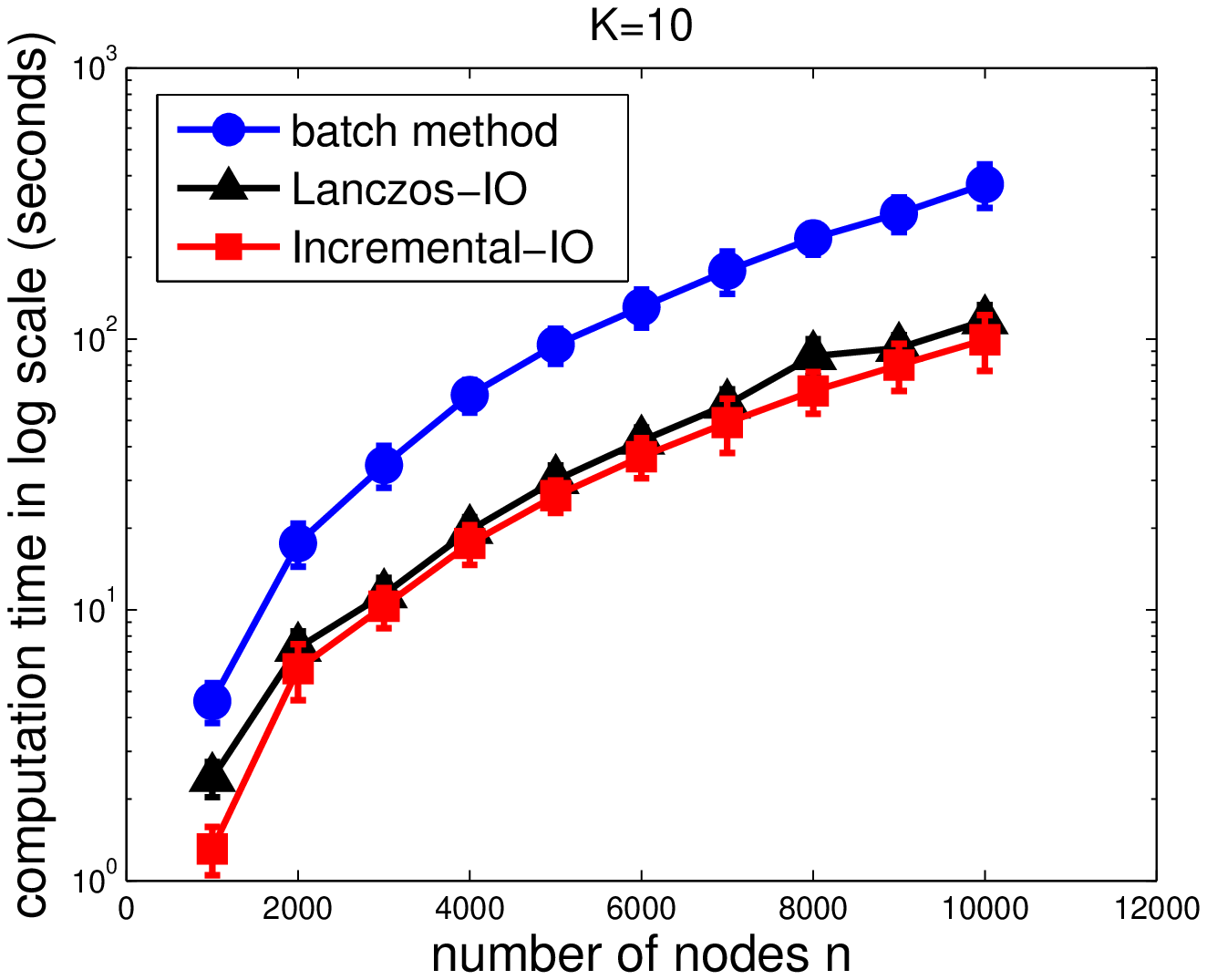}
		\caption{~}
		\label{Fig_incremental_computation_n}
	\end{subfigure}
	\caption{Sequential eigenpair computation time on Erdos-Renyi random graphs with edge connection probability $p=0.1$. The marker represents averaged computation time of 50 trials and the error bar represents standard deviation. 
		(a) Computation time with $n=10000$ and different number of eigenpairs $K$. 
		It is observed that the
			computation time of Incremental-IO and Lanczos-IO grows linearly as $K$
			increases, whereas the computation time of the batch method grows superlinearly
			with $K$. 
		(b) Computation time with $K=10$ and different number of nodes $n$. 
		It is observed that the
			difference in computation time between the batch method and the two incremental
			methods grow polynomially as $n$ increases, which suggests that in this experiment Incremental-IO and Lanczos-IO are more efficient than the
			batch computation method, especially for large graphs.}
	\label{Fig_incremental_computation}
	\vspace{-4mm}
\end{figure*}	



\subsection{Comparison of computation time on real-life datasets}
\label{subsec_time_real}
Fig. \ref{Fig_nor} shows the time improvement of Incremental-IO relative to the batch method for the real-life datasets listed in Table
\ref{tab:statistic}, where the difference in computation time is displayed in
log scale to accommodate large variance of time improvement across datasets
that are of widely varying size.  It is observed that the gain in computational
time via Incremental-IO is more pronounced as cluster count
$K$ increases, which demonstrates the merit of the proposed incremental method. 

\begin{figure}[t]
	\centering
	\includegraphics[width=4in]{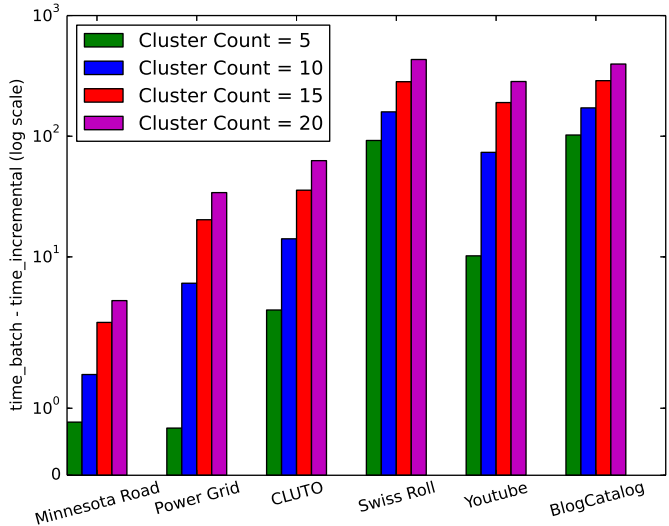}
	\caption{Computation time improvement of Incremental-IO relative to the batch method. Incremental-IO outperforms the batch method for all cases, and has improvement with respect to $K$.}
	\label{Fig_nor}
\end{figure}	

\begin{figure}[t]
	\centering
	\includegraphics[width=4in]{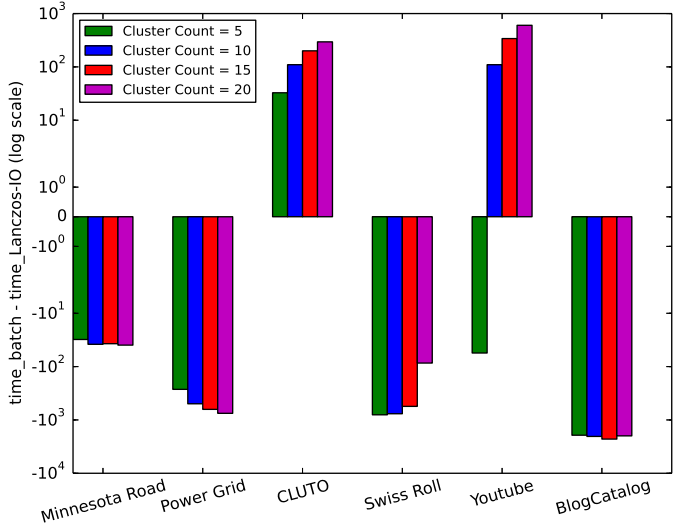}
	\caption{Computation time improvement of Lanczos-IO relative to the batch method. Negative values mean that Lanczos-IO requires more computation time than the batch method. The results suggest that Lanczos-IO is not a robust incremental computation method, as it can perform even worse than the batch method for some cases.}
	\label{Fig_nor2}
\end{figure}

On the other hand, although Lanczos-IO is also an incremental method, in addition to the well-known issue of requiring memory allocation for storing all Lanczos vectors, the experimental results show that it does not provide performance robustness as Incremental-IO does, as it can perform even worse than the batch method for some cases. 	Fig. \ref{Fig_nor2} shows that Lanczos-IO actually results in excessive computation time compared with the batch method for four out of the six datasets, whereas in Fig. \ref{Fig_nor} Incremental-IO is superior than the batch method for all these datasets, which demonstrates the robustness of Incremental-IO over Lanczos-IO.
 The reason of lacking robustness for Lanczos-IO
can be explained by the fact that the previously computed Lanczos vectors may not be effective in minimizing the Ritz approximation error of the desired eigenpairs. In contrast, Incremental-IO and the batch method adopt the implicitly-restarted Lanczos method, which restarts the Lanczos algorithm when the generated Lanczos vectors fail to meet the Ritz approximation criterion, and may eventually lead to faster convergence. Furthermore, Fig. \ref{Fig_effect_Lanczos} shows that Lanczos-IO is overly sensitive to the number of augmented Lanczos vectors $Z_{aug}$, which is a parameter that cannot be optimized \textit{a priori}.

\begin{figure*}[t]
	\centering
	\begin{subfigure}[b]{0.4\linewidth}
		\includegraphics[width=\textwidth]{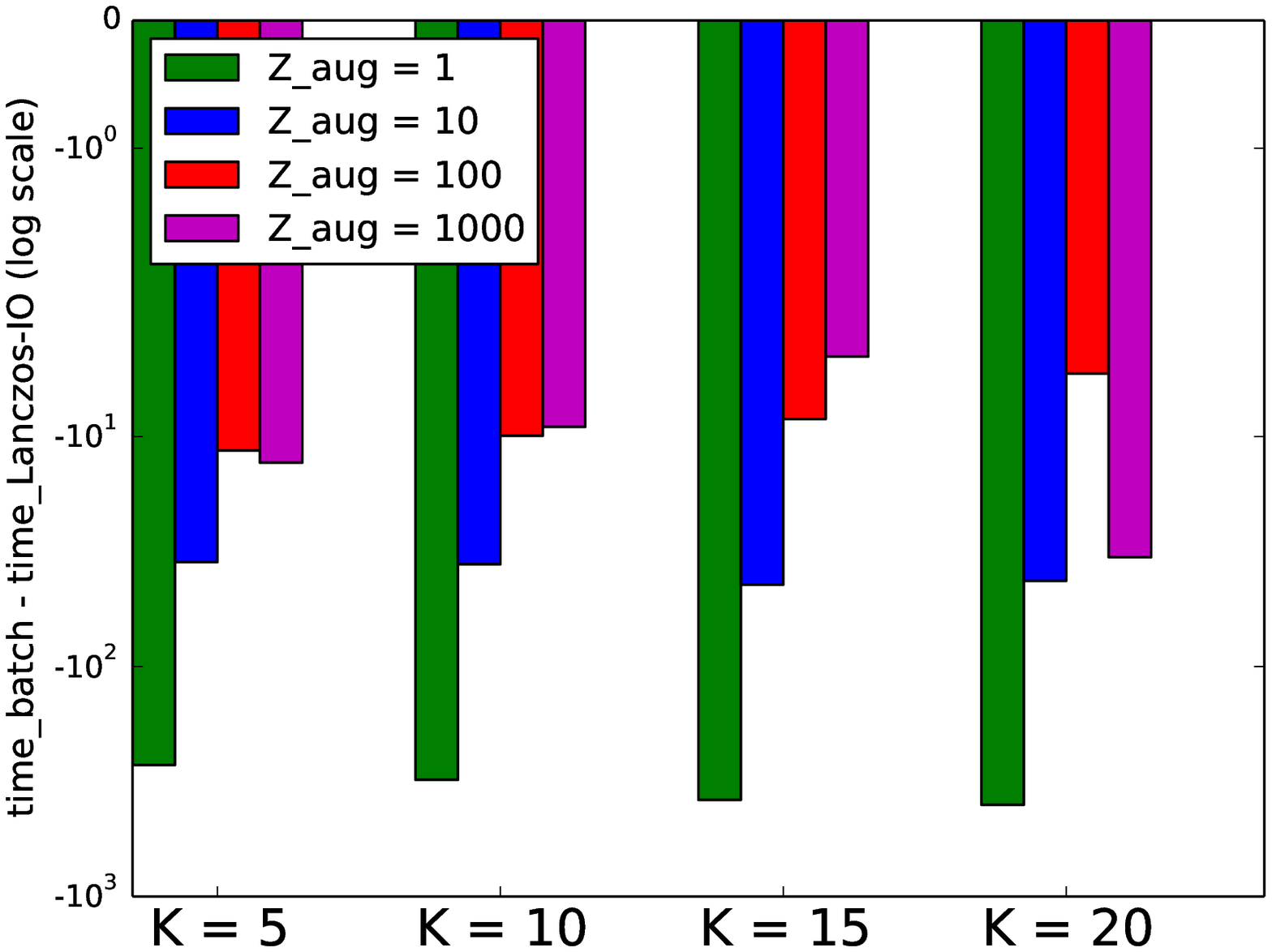}
		\caption{Minnesota Road}
	\end{subfigure}%
	\centering
	\begin{subfigure}[b]{0.4\linewidth}
		\includegraphics[width=\textwidth]{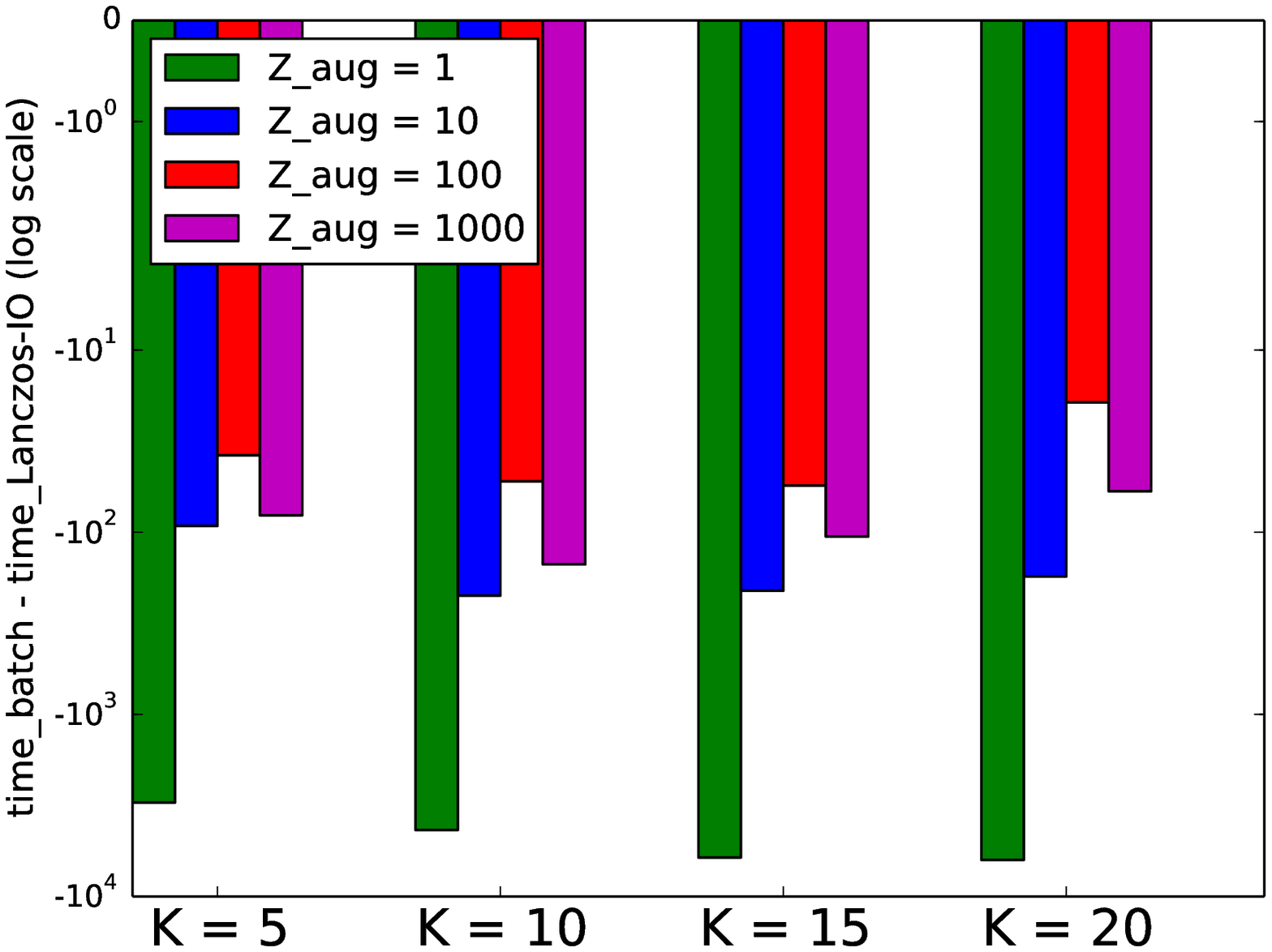}
		\caption{Power Grid}
	\end{subfigure}
	\\
	\centering
	\begin{subfigure}[b]{0.4\linewidth}
		\includegraphics[width=\textwidth]{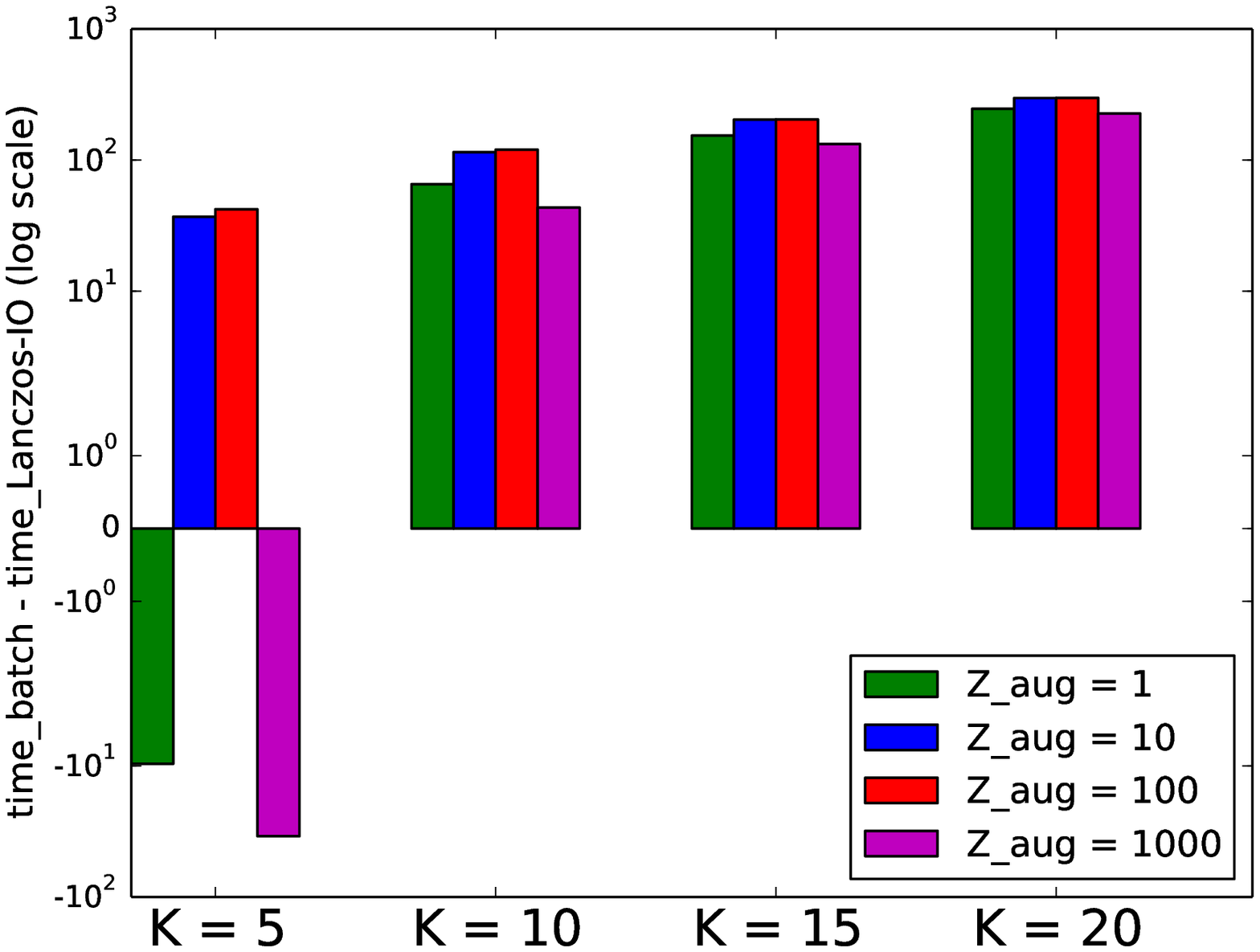}
		\caption{CLUTO}
	\end{subfigure}	
	\centering
	\begin{subfigure}[b]{0.4\linewidth}
		\includegraphics[width=\textwidth]{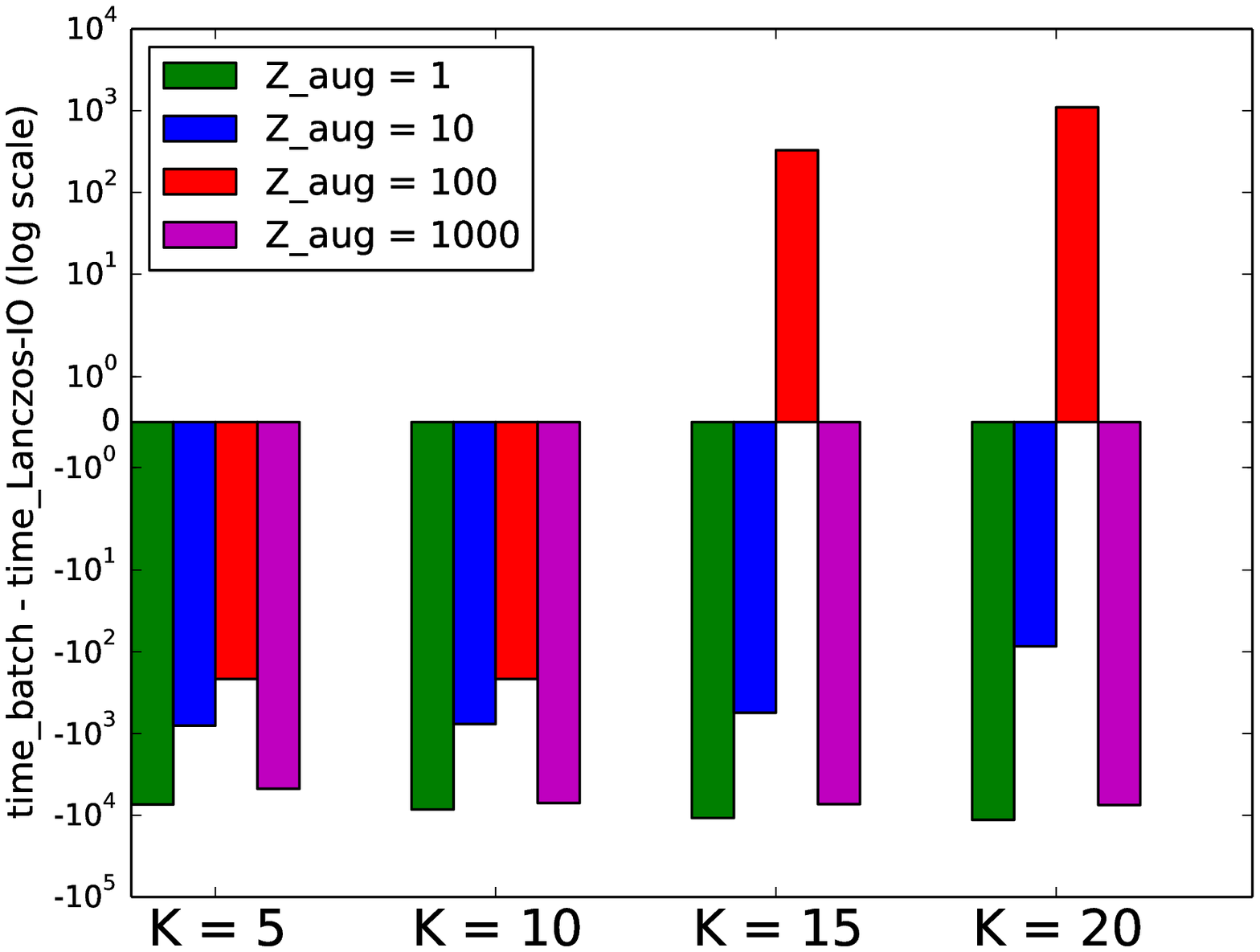}
		\caption{Swiss Roll}
	\end{subfigure}		
	\\
	\centering
	\begin{subfigure}[b]{0.4\linewidth}
		\includegraphics[width=\textwidth]{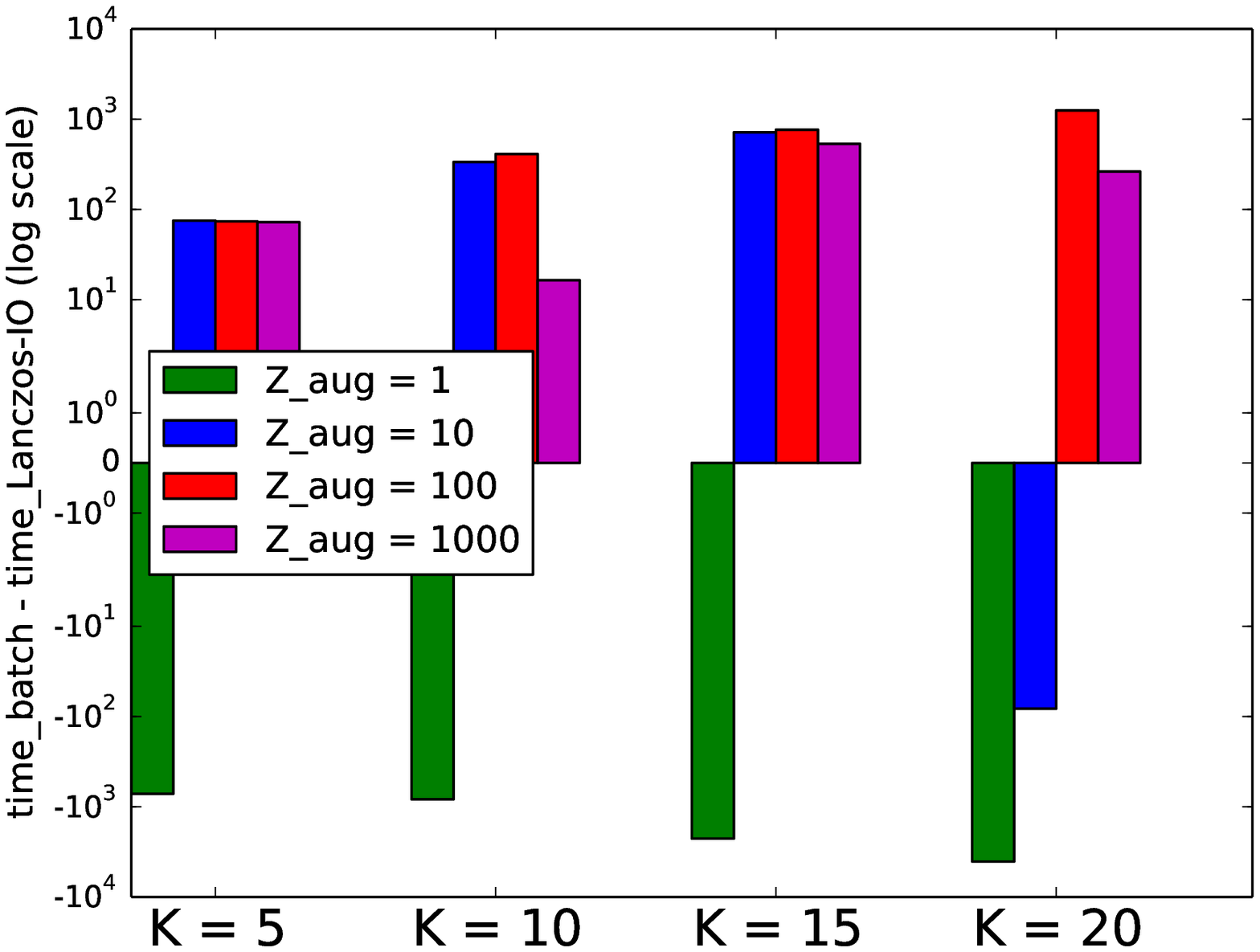}
		\caption{Youtube}
	\end{subfigure}
	\centering
	\begin{subfigure}[b]{0.4\linewidth}
		\includegraphics[width=\textwidth]{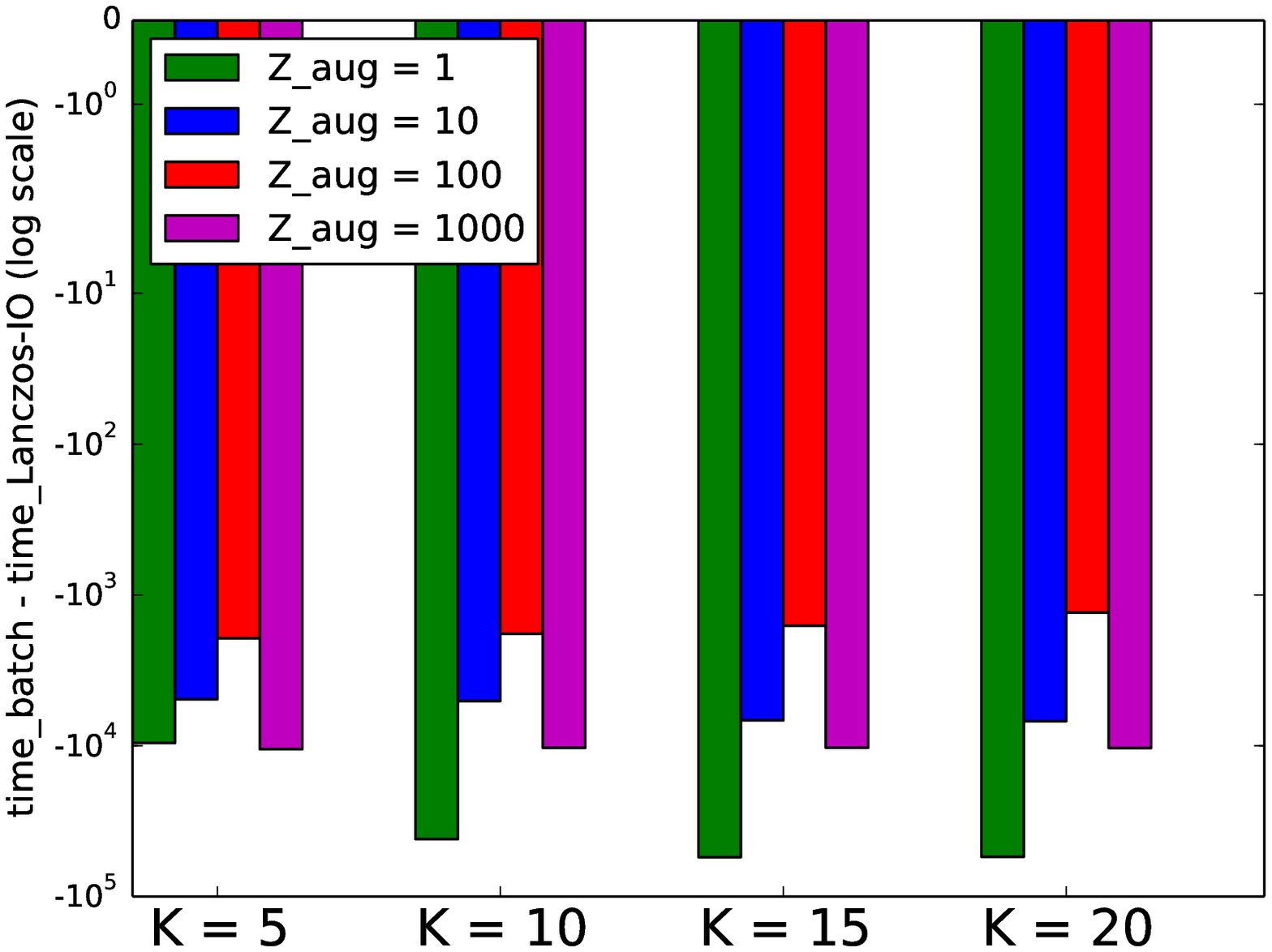}
		\caption{BlogCatalog}
	\end{subfigure}									
	\caption{The effect of number of augmented Lanczos vectors $Z_{aug}$ of Lanczos-IO in \textbf{Algorithm} \ref{algo_LanczosIO} on computation time improvement relative to the batch method. Negative values mean that the computation time of Lanczos-IO is larger than that of the batch method. The results show that Lanczos-IO is not a robust incremental eigenpair computation method. Intuitively, small $Z_{aug}$ may incur many iterations in the second step of \textbf{Algorithm} \ref{algo_LanczosIO}, whereas large $Z_{aug}$ may pose computation burden in the first step of \textbf{Algorithm} \ref{algo_LanczosIO}, and therefore both cases lead to the increase in computation time.}
	\label{Fig_effect_Lanczos}
\end{figure*}


\textbf{Theorem}~\ref{thm_connect_GL} establishes that the proposed incremental method (Incremental-IO)
exactly computes the $K$-th eigenpair using 1 to $(K-1)$-th eigenpairs, yet for
the sake of experiments with real datasets, we have computed the normed
eigenvalue difference (in terms of root mean squared error) and the correlations of the $K$ smallest eigenvectors obtained from the batch method and Incremental-IO. As displayed in Fig. \ref{Fig_RMSE_eigvalue}, the $K$ smallest eigenpairs are identical as expected; to be more specific, using
Matlab library, on the Minnesota road dataset for $K=20$, the normed
eigenvalue difference is $7\times 10^{-12}$ and the associated eigenvectors are identical up to differences in sign. 
For all datasets listed in Table \ref{tab:statistic},  the normed eigenvalue difference is negligible and the associated eigenvectors are identical up to the difference in sign, i.e., the eigenvector correlation in magnitude equals to 1 for every pair of corresponding eigenvectors of the two methods, which verifies the correctness of  Incremental-IO. 
Moreover, due to the eigenpair consistency between the batch method and Incremental-IO as demonstrated in  Fig. \ref{Fig_RMSE_eigvalue}, they yield the same clustering results in the considered datasets.

\begin{figure*}[]
	\centering
	\begin{subfigure}[b]{0.4\textwidth}
		\includegraphics[width=\textwidth]{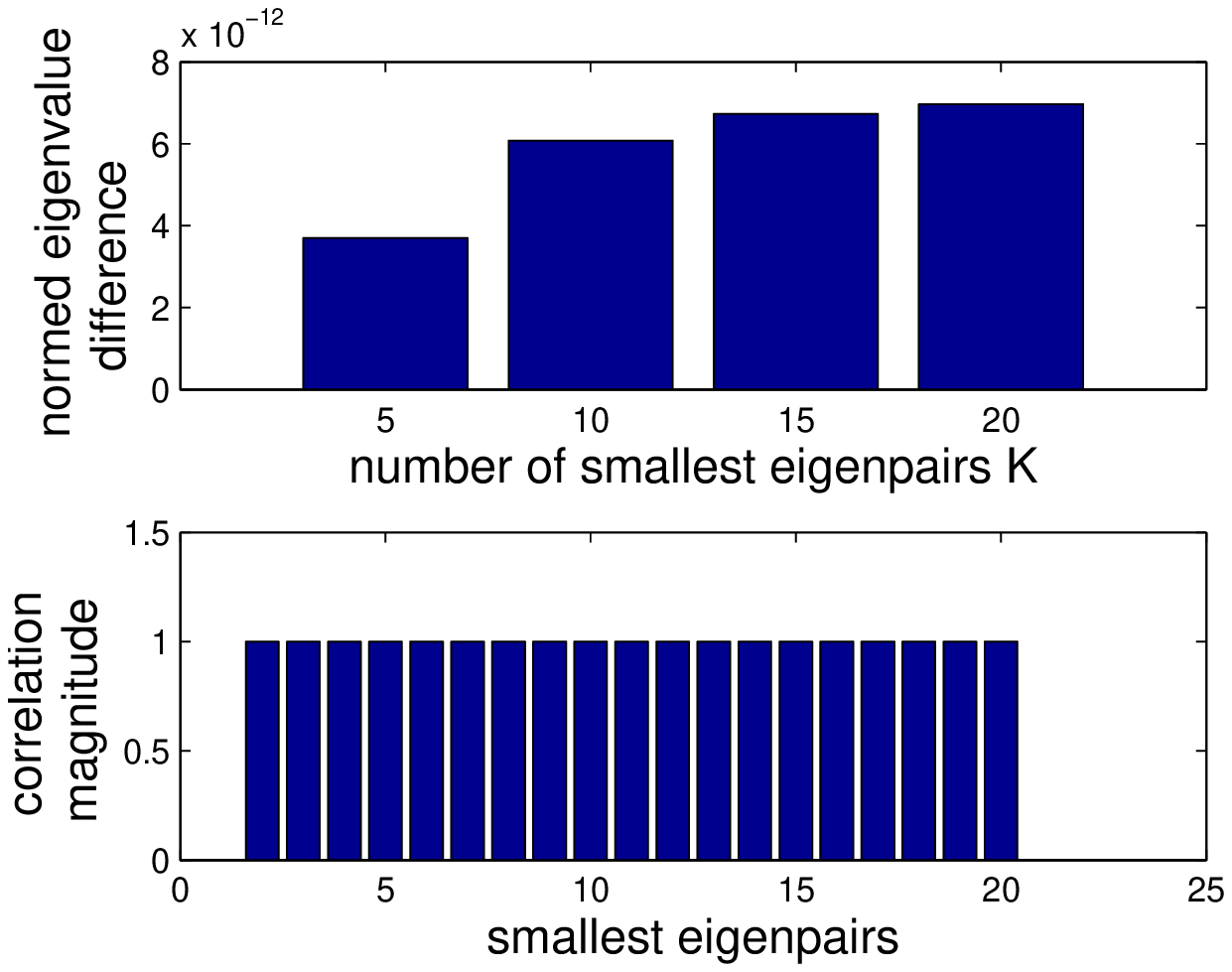}
		\caption{Minnesota Road}
		\label{Fig_RMSE_eigvalue_Minnesota}
	\end{subfigure}%
	\centering
	\begin{subfigure}[b]{0.4\textwidth}
		\includegraphics[width=\textwidth]{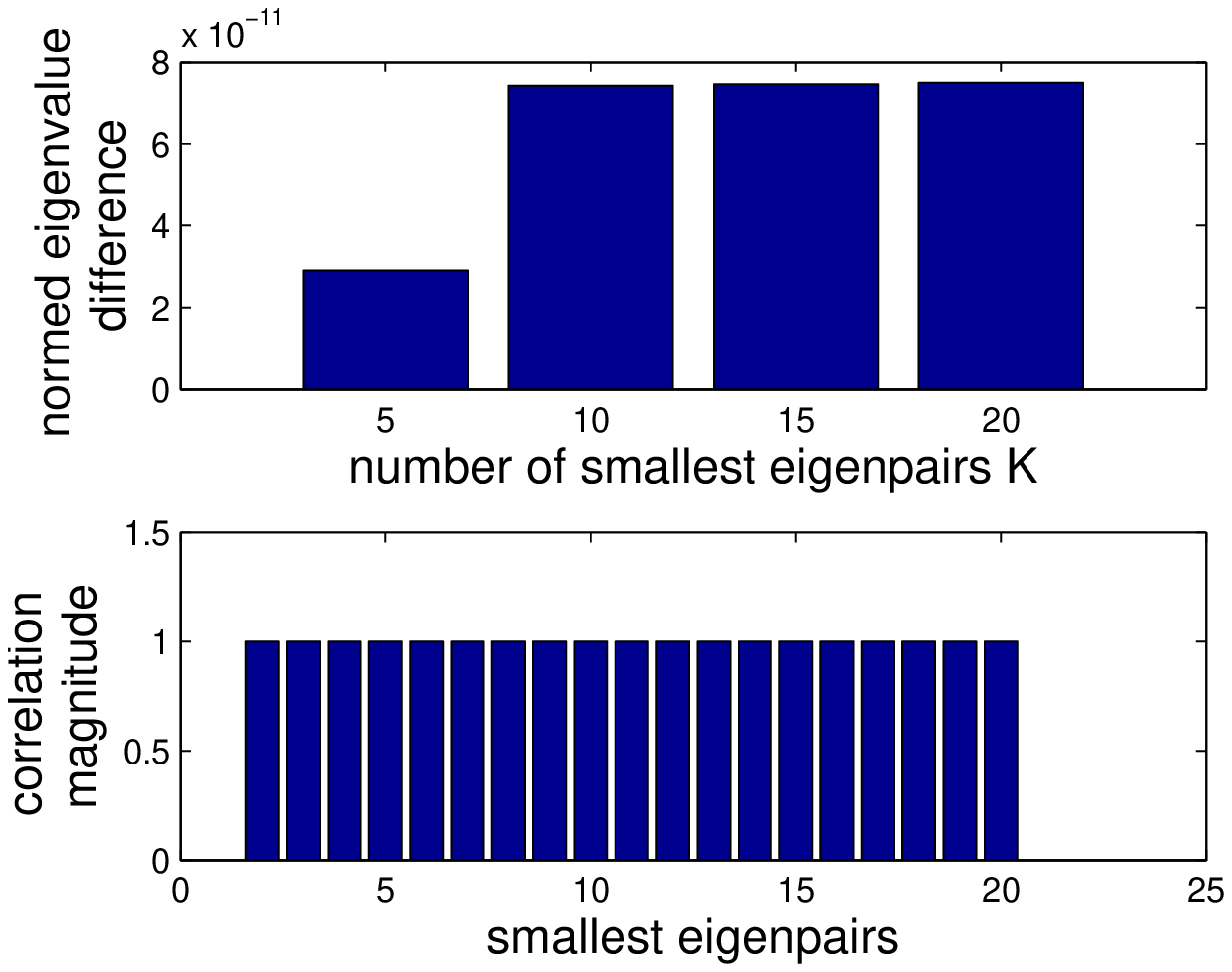}
		\caption{Power Grid}
		\label{Fig_RMSE_eigvalue_Power}
	\end{subfigure}%
	\\
	\centering
	\begin{subfigure}[b]{0.4\textwidth}
		\includegraphics[width=\textwidth]{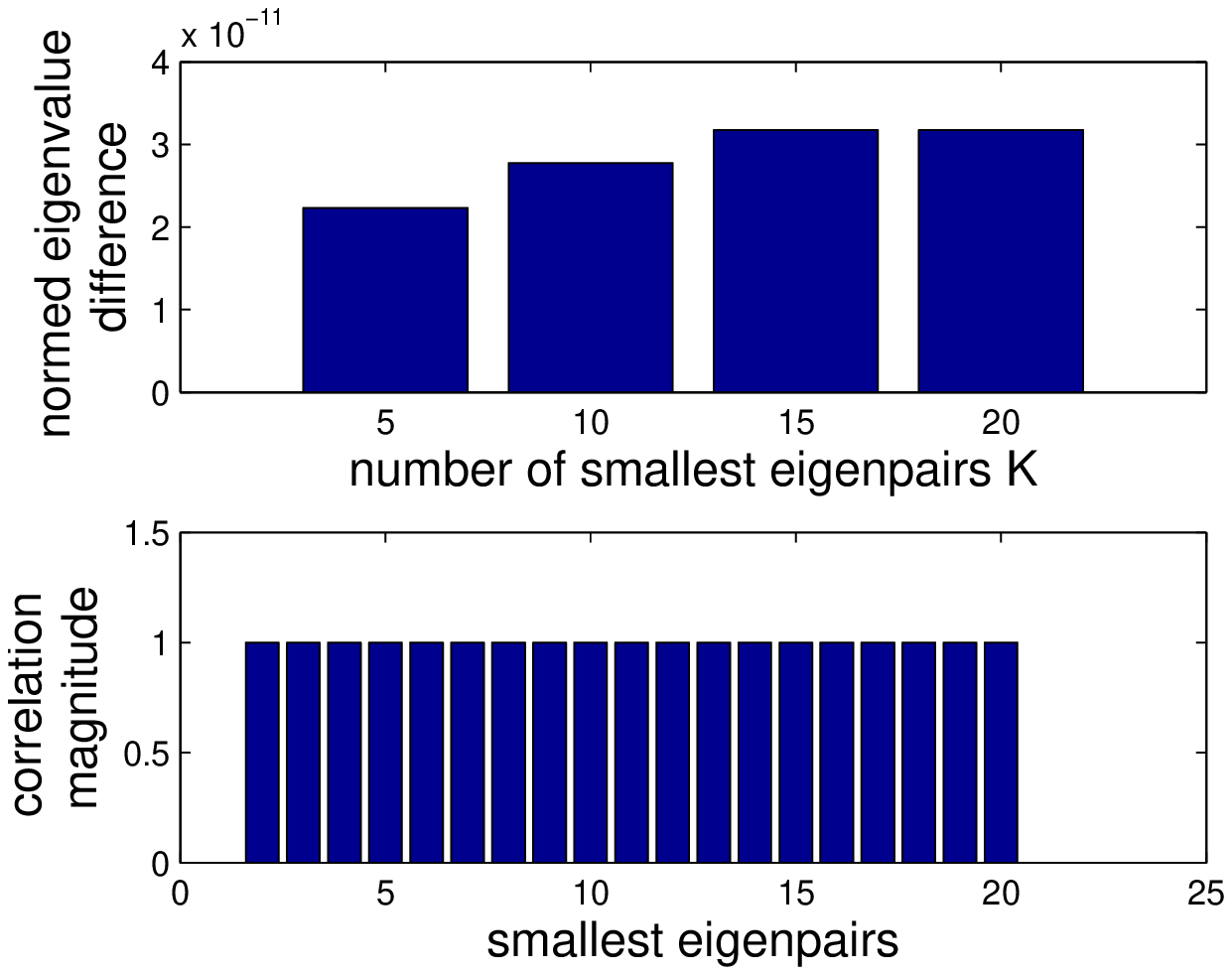}
		\caption{CLUTO}
		\label{Fig_RMSE_eigvalue_CLUTO}
	\end{subfigure}
	\centering
	\begin{subfigure}[b]{0.4\textwidth}
		\includegraphics[width=\textwidth]{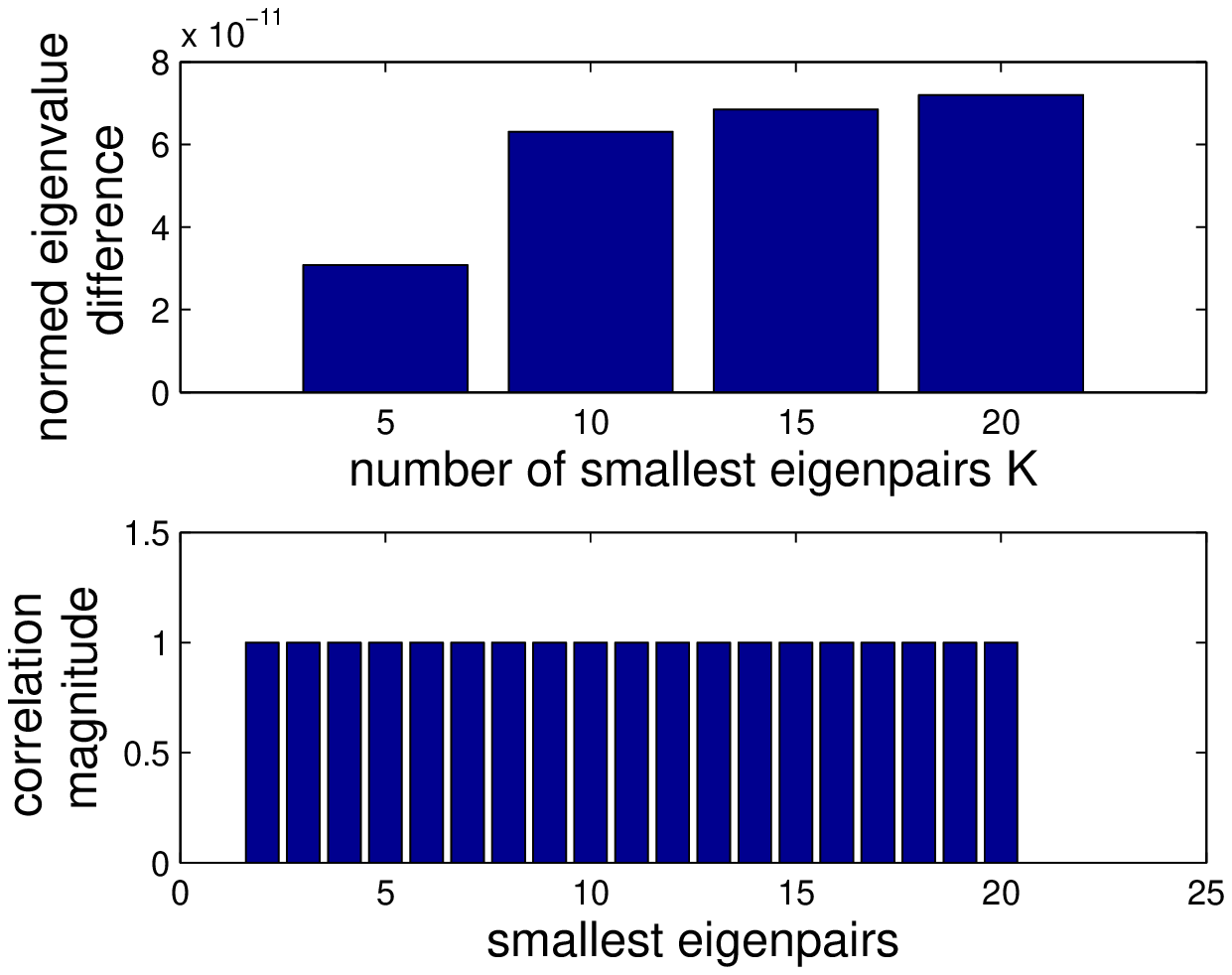}
		\caption{Swiss Roll}
		\label{Fig_RMSE_eigvalue_Swiss}
	\end{subfigure}
	\\
	\centering
	\begin{subfigure}[b]{0.4\textwidth}
		\includegraphics[width=\textwidth]{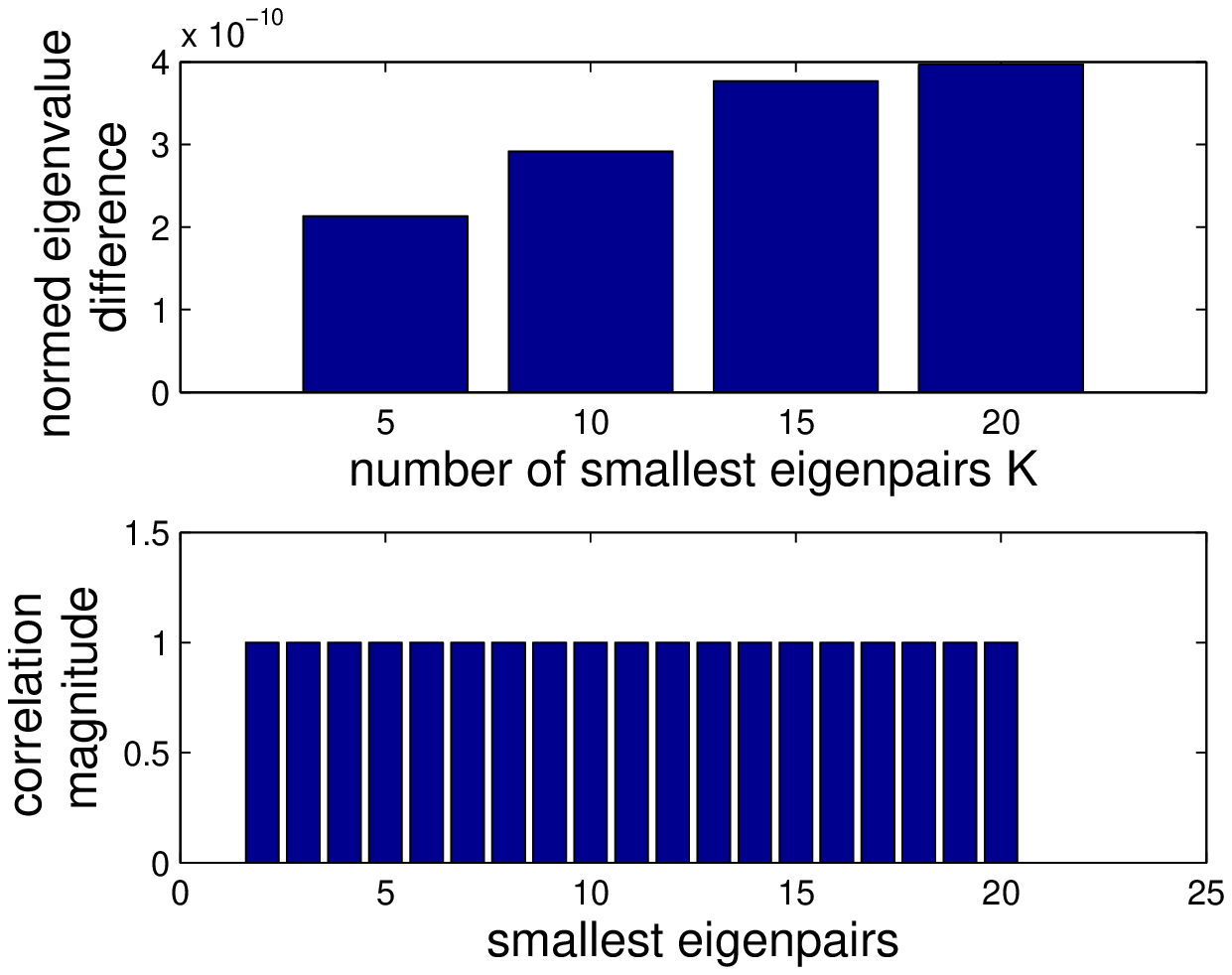}
		\caption{Youtube}
		\label{Fig_RMSE_eigvalue_Youtube}
	\end{subfigure}%
	\centering
	\begin{subfigure}[b]{0.4\textwidth}
		\includegraphics[width=\textwidth]{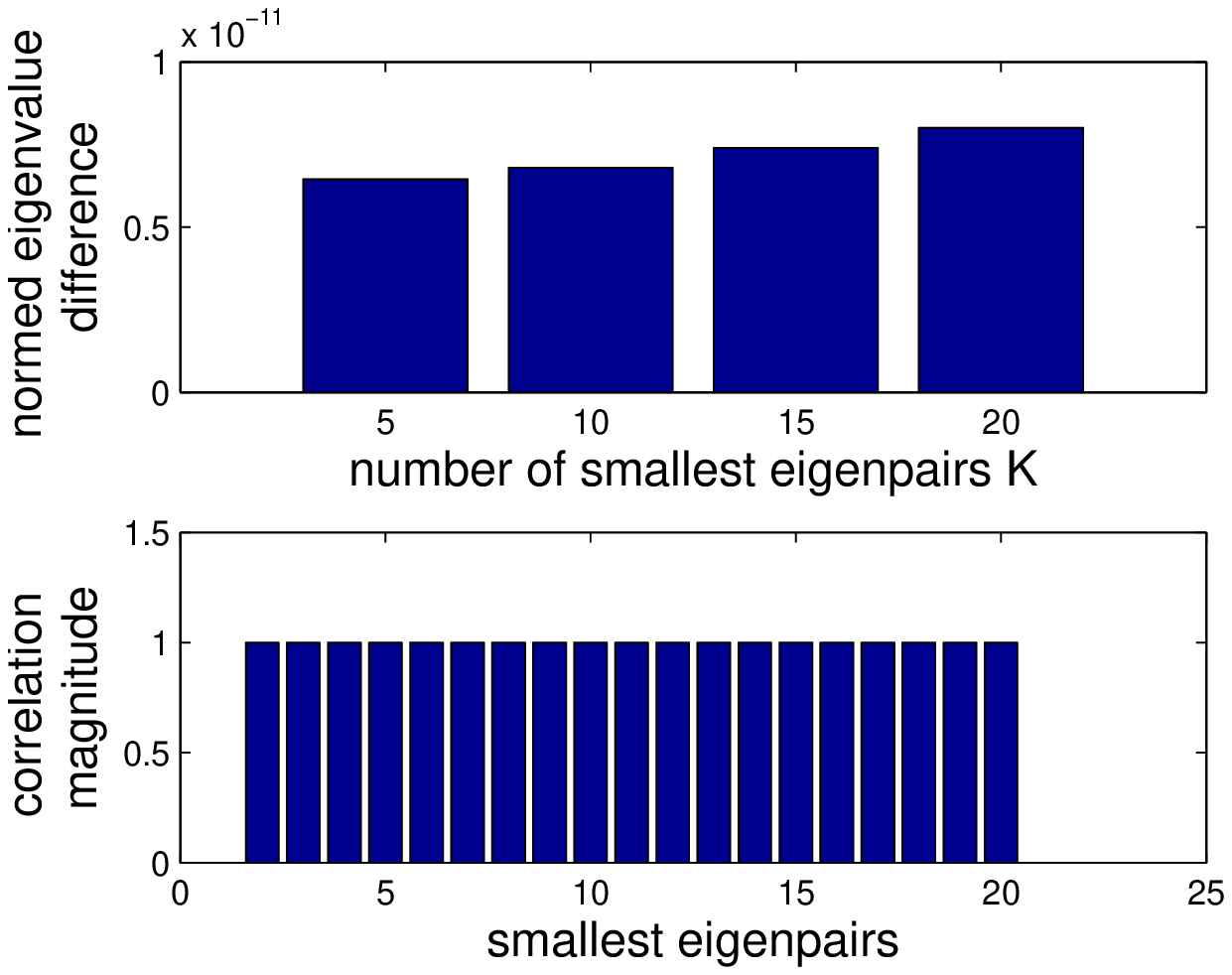}
		\caption{BlogCatalog}
		\label{Fig_RMSE_eigvalue_Blog}
	\end{subfigure}%
	\caption{Consistency of smallest eigenpairs computed by the batch computation method and Incremental-IO for datasets listed in Table \ref{tab:statistic}.
		The normed eigenvalue difference is the square root of sum of squared differences between eigenvalues. The correlation magnitude is the absolute value of inner product of eigenvectors, where 1 means perfect alignment.}
	\label{Fig_RMSE_eigvalue}							
\end{figure*}

\subsection{Clustering metrics for user-guided spectral clustering}
\label{subsec_clustering_metric}
In real-life, an analyst can use Incremental-IO for clustering along with a
mechanism for selecting the best choice of $K$ starting from $K=2$. To
demonstrate this, in the experiment we use five clustering metrics that can be
used for online decision making regarding the value of $K$. These metrics are
commonly used in clustering unweighted and weighted graphs and they are summarized as
follows.

\noindent \textbf{1. Modularity:} modularity is defined as
\begin{align}
\text{Mod} = \displaystyle \sum_{i=1}^{K} \bigg(\frac{W(\mathcal{C}_i,\mathcal{C}_i)}{W(\mathcal{V},\mathcal{V})} - \bigg(\frac{W(\mathcal{C}_i,\mathcal{V})}{W(\mathcal{V},\mathcal{V})}\bigg)^{2}\bigg),
\end{align}
where $\mathcal{V}$ is the set of all nodes in the graph, $\mathcal{C}_i$ is the $i$-th cluster,  $W(\mathcal{C}_i,\mathcal{C}_i)$ ($W(\mathcal{C}_i,\overline{\mathcal{C}_i})$) denotes the sum of weights of all internal (external) edges of the $i$-th cluster, 	$W(\mathcal{C}_i,\mathcal{V}) = \displaystyle W(\mathcal{C}_i, \mathcal{C}_i) + W(\mathcal{C}_i,\overline{\mathcal{C}_i})$, and
$W(\mathcal{V},\mathcal{V}) =  \sum_{j=1}^{n} s_j=s$ denotes the total nodal strength.

\noindent \textbf{2. Scaled normalized cut (SNC):}  NC is defined as \cite{Zaki.Jr:14} 
\begin{align}
\label{eq:NC}
&\text{NC} = \displaystyle \sum_{i=1}^{K} \frac{ W(\mathcal{C}_i, \overline{\mathcal{C}_i})}{W(\mathcal{C}_i, \mathcal{V})}.  
\end{align}
SNC is NC divided by the number of clusters, i.e., NC/$K$.

\noindent \textbf{3. Scaled median (or maximum) cluster size:} Scaled medium (maximum) cluster size is the medium (maximum) cluster size of $K$ clusters divided by the total number of nodes $n$ of a graph.

\noindent \textbf{4. Scaled spectrum energy:} scaled spectrum energy is
the sum of the $K$ smallest eigenvalues of the graph Laplacian matrix $\bL$
divided by the sum of all eigenvalues of $\bL$, which can be 
computed by
\begin{align}
\text{scaled spectrum energy}=\frac{\sum_{i=1}^{K} \lambda_i(\bL)}{\sum_{j=1}^n \bL_{jj}},		
\end{align}
where $\lambda_i(\bL)$ is the $i$-th smallest eigenvalue of $\bL$ and $\sum_{j=1}^n \bL_{jj}$ 
$=\sum_{i=1}^{n} \lambda_i(\bL)$ is the sum of diagonal elements of $\bL$.

These metrics provide alternatives for gauging the quality of the clustering method. For example, Mod and NC reflect the trade-off between intracluster
similarity and intercluster separation. Therefore, the larger the value of Mod, the
better the clustering quality, and the smaller the value of NC, the better
the clustering quality. Scaled spectrum energy is a typical measure of cluster
quality for spectral clustering
\cite{Polito01grouping,ng2002spectral,zelnik2004self}, and smaller spectrum
energy means better separability of clusters. 
For Mod and scaled NC, a user might look for a cluster count $K$ such that the increment in the clustering metric is not significant, i.e., the clustering metric is saturated beyond such a $K$. For scaled median and maximum cluster size, 
a user might require the cluster count $K$ to be such that the clustering metric is below a desired value. For scaled spectrum energy, a user might look for a noticeable increase in the clustering metric between consecutive values of $K$.

\subsection{Demonstration}
Here we use Minnesota Road data to demonstrate how users can utilize the clustering metrics in Sec. \ref{subsec_clustering_metric} to determine the number of clusters.
For example, the five metrics evaluated for Minnesota Road clustering with respect to different cluster counts $K$ are displayed in Fig. \ref{Fig_clustering_metric} (a).
Starting from $K=2$ clusters, these metrics are updated by the incremental user-guided spectral clustering algorithm
(\textbf{Algorithm}~\ref{algo_incremental_automated_clustering}) as $K$ increases.
If the user imposes that the maximum cluster size should be less than $30\%$ of the total number of nodes, then the algorithm 
returns clustering results with a number of clusters of $K=6$ or greater.
Inspecting the modularity one sees it saturates at $K=7$, and the user also observes a noticeable increase in scaled spectrum energy when $K=7$. Therefore, the algorithm can be used to incrementally generate four clustering results for $K=7,8,9$, and $10$.
The selected clustering results in Fig. \ref{Fig_Minnesota_cluster_visualization}  are shown to be consistent with geographic separations of different granularity.

\begin{figure*}[]
	\centering
	\begin{subfigure}[b]{0.4\textwidth}
		\includegraphics[width=\textwidth]{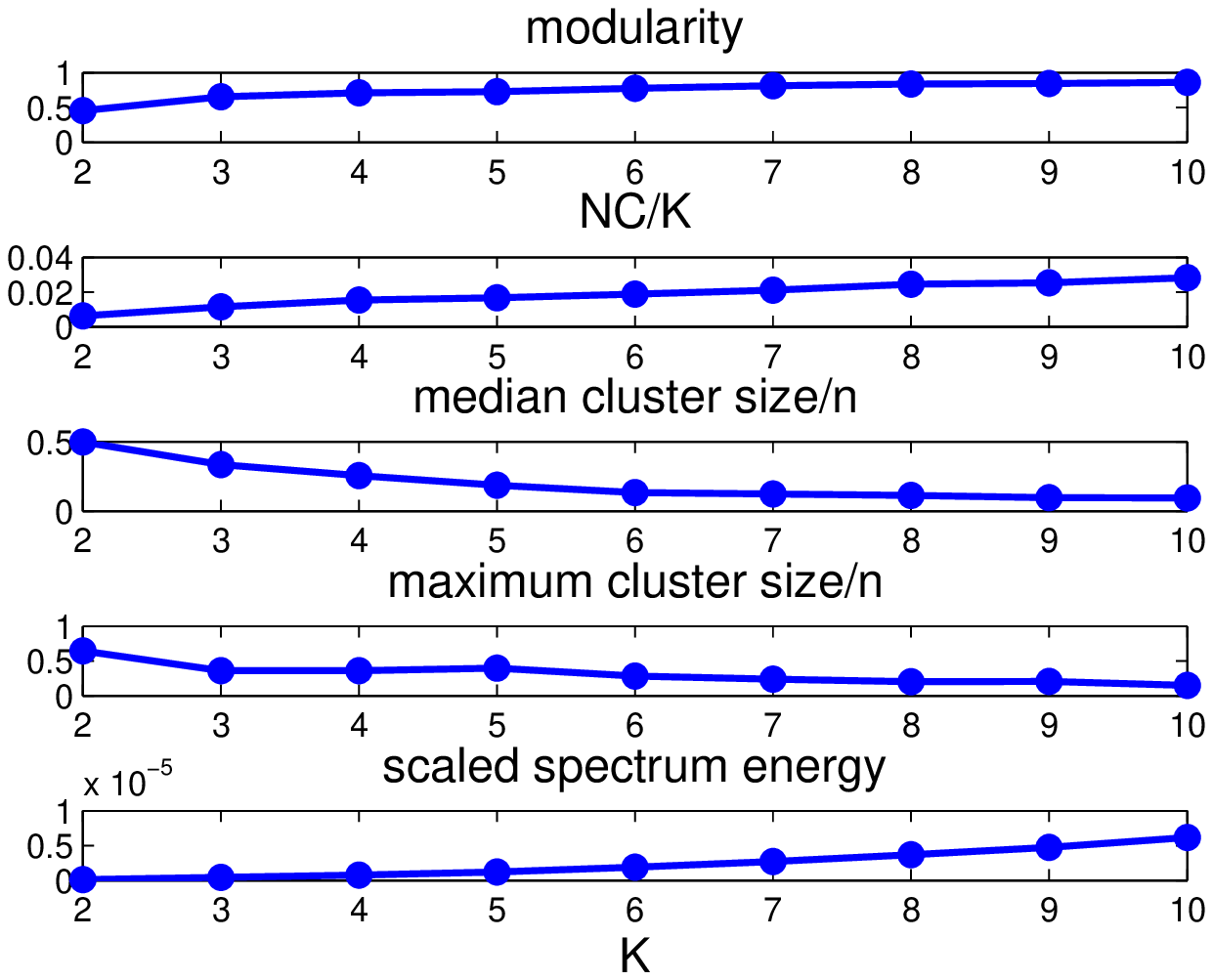}
		\caption{Minnesota Road}
		\label{Fig_Minnesota_Road_metric2}
	\end{subfigure}%
	\centering
	\begin{subfigure}[b]{0.4\textwidth}
		\includegraphics[width=\textwidth]{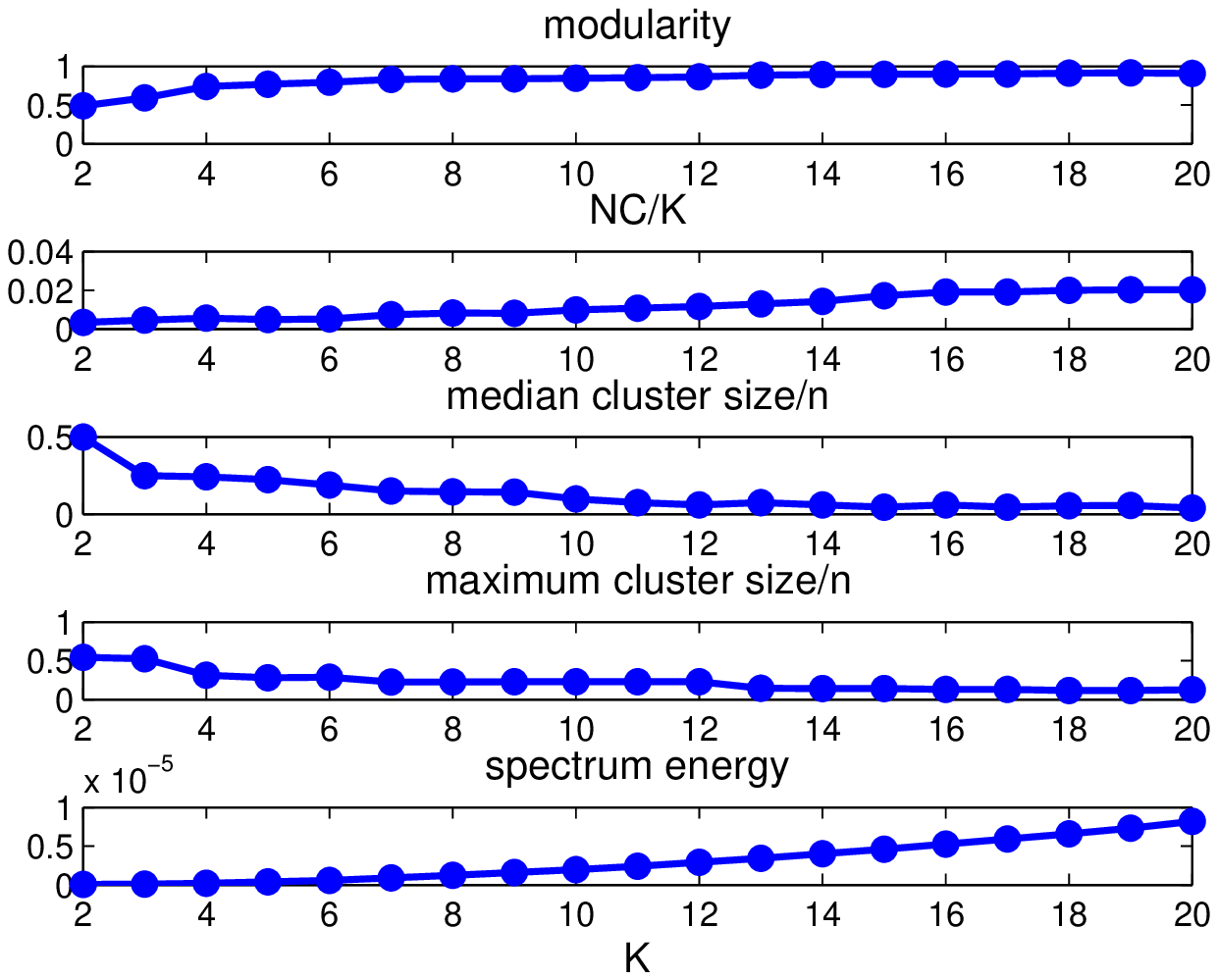}
		\caption{Power Grid}
		\label{Fig_Newman_Power_metric}
	\end{subfigure}%
	\\
	\centering
	\begin{subfigure}[b]{0.4\textwidth}
		\includegraphics[width=\textwidth]{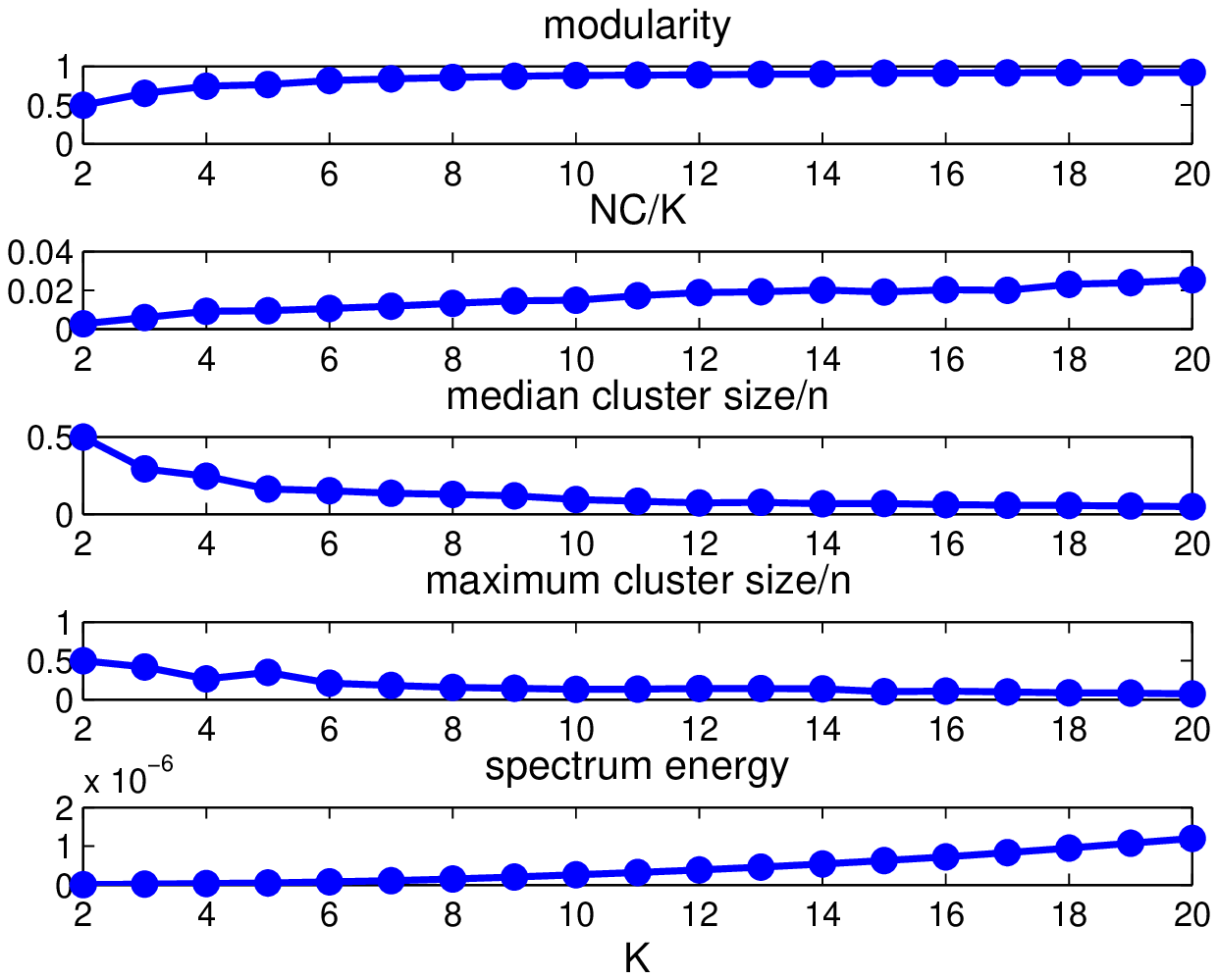}
		\caption{Swiss Roll}
		\label{Fig_Swiss_metric}
	\end{subfigure}%
	\centering
	\begin{subfigure}[b]{0.4\textwidth}
		\includegraphics[width=\textwidth]{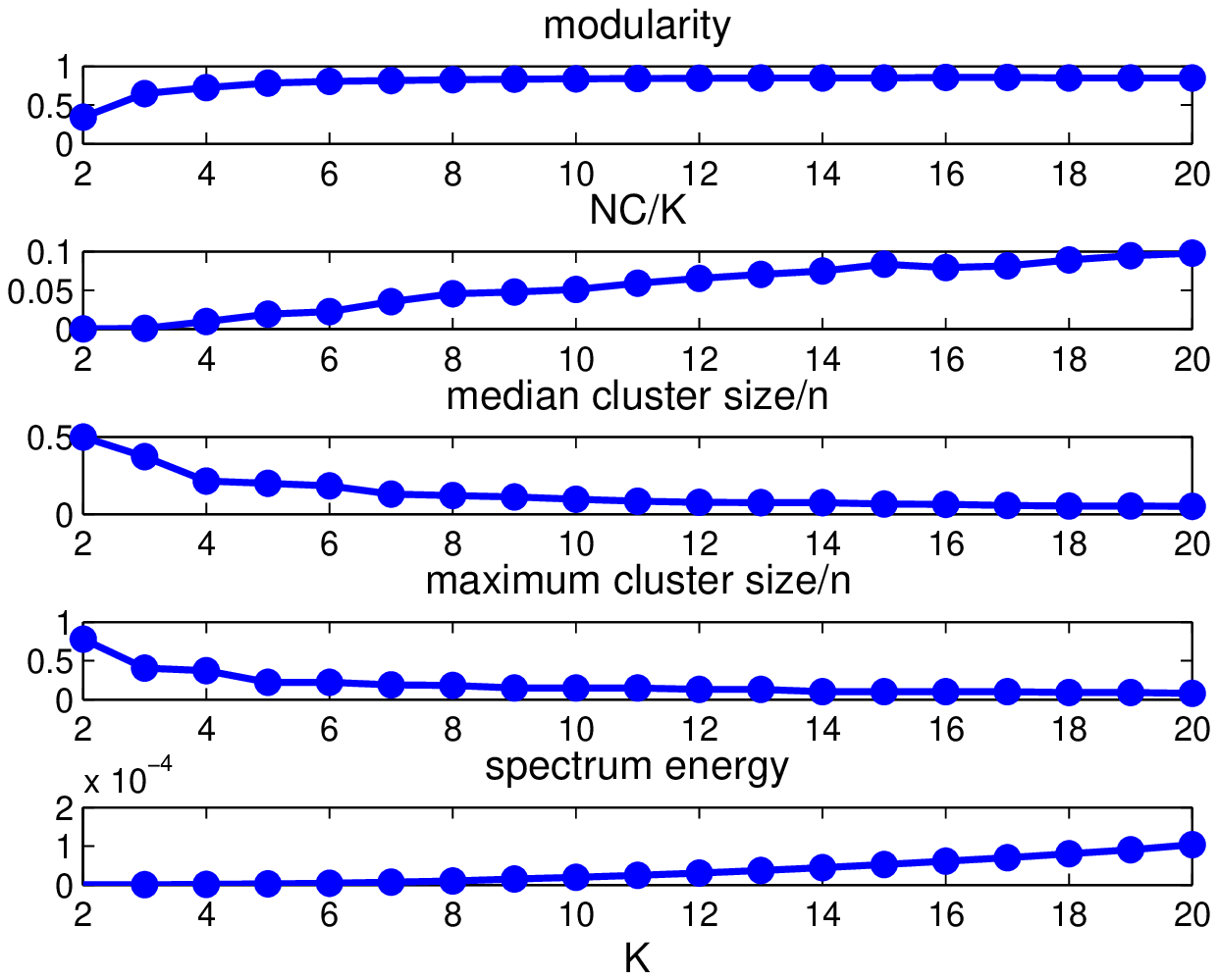}
		\caption{CLUTO}
		\label{Fig_cluto_t7_clean}			
	\end{subfigure}%
	\\
	\centering
	\begin{subfigure}[b]{0.4\textwidth}
		\includegraphics[width=\textwidth]{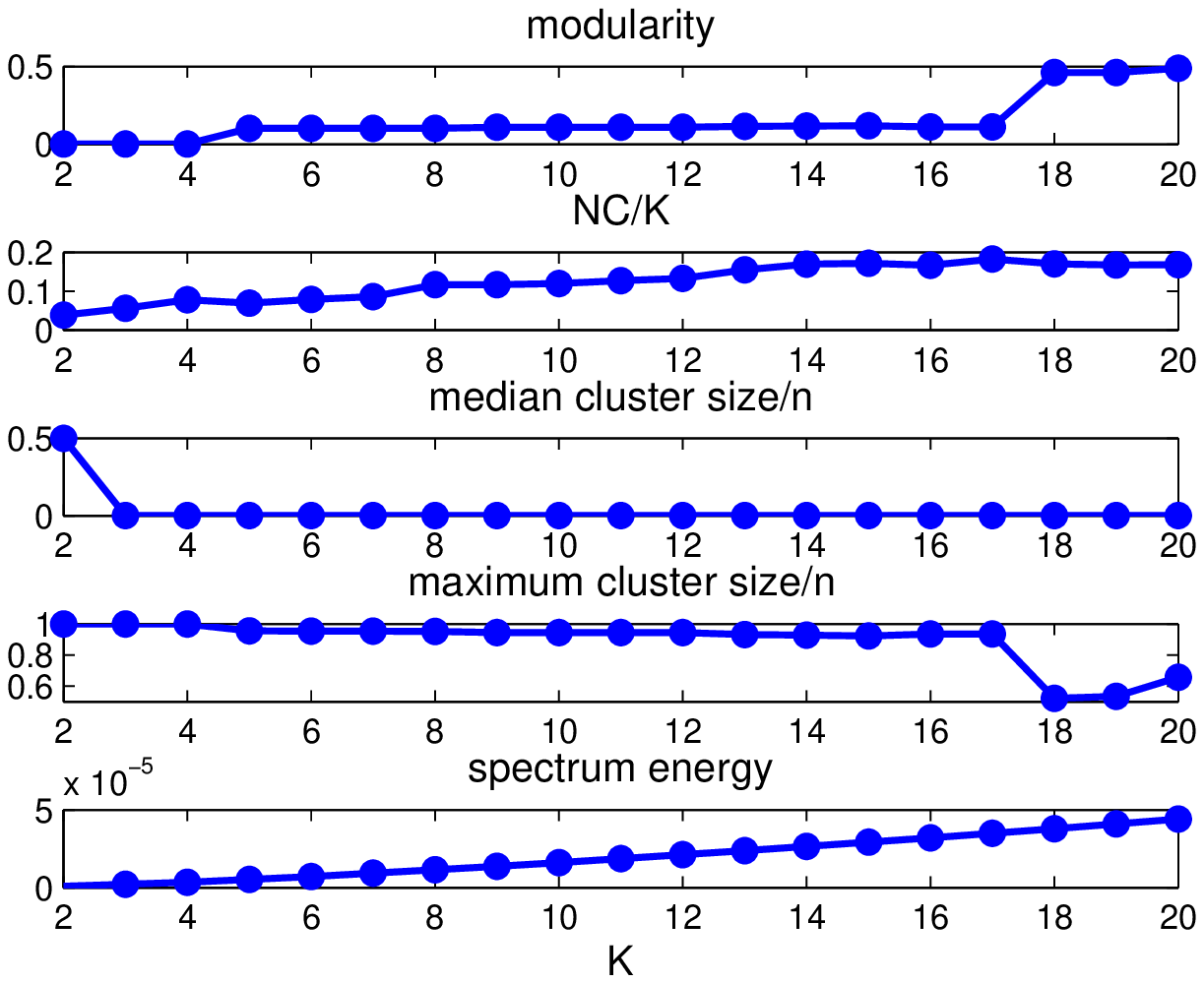}
		\caption{Youtube}
		\label{Fig_youtube_1}
	\end{subfigure}%
	\centering
	\begin{subfigure}[b]{0.4\textwidth}
		\includegraphics[width=\textwidth]{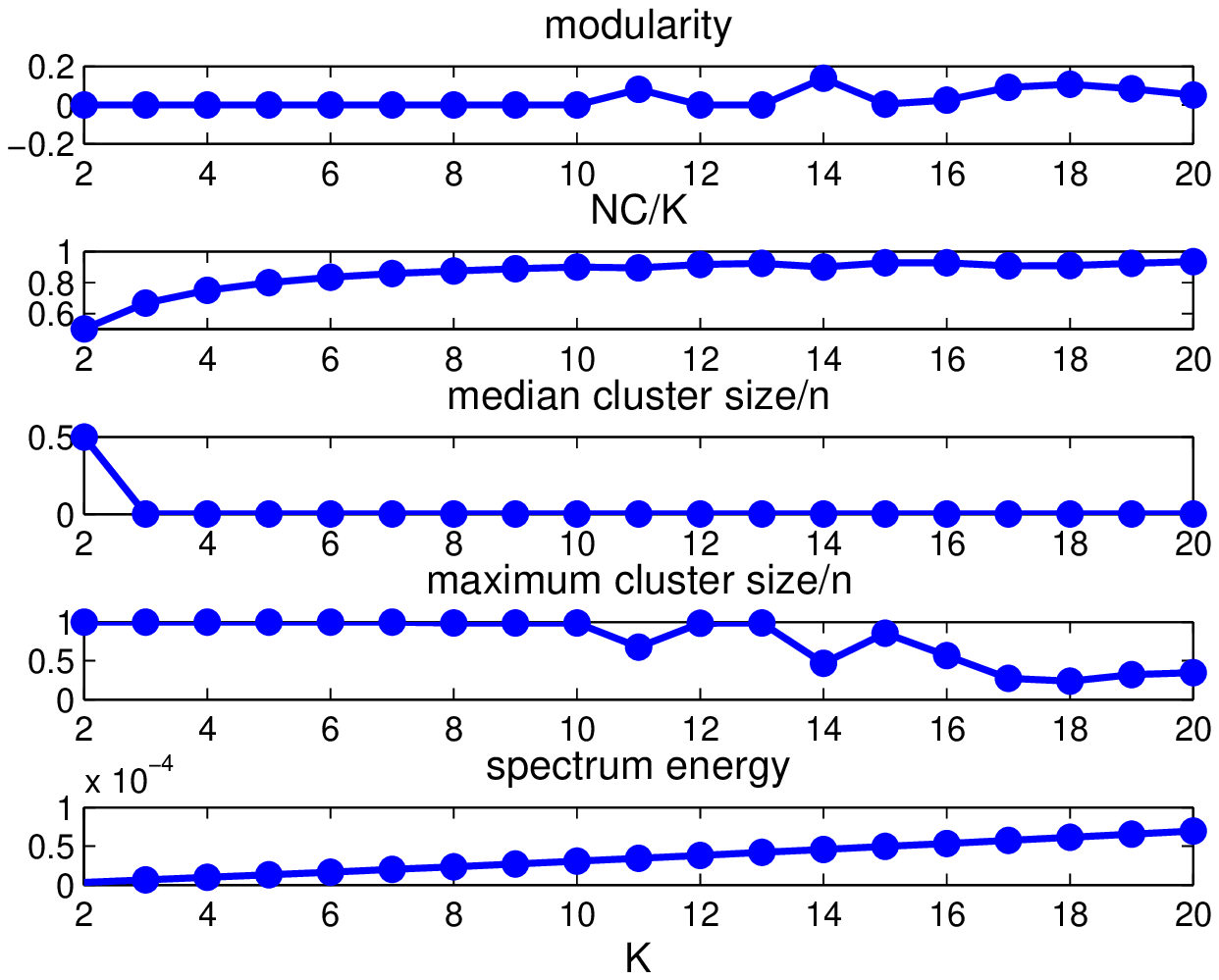}
		\caption{BlogCatalog}
		\label{Fig_blog}
	\end{subfigure}%
	\caption{Five clustering metrics computed incrementally via \textbf{Algorithm} \ref{algo_incremental_automated_clustering}
		for different datasets listed in Table \ref{tab:statistic}. The metrics are modularity, scaled normalized cut (NC/$K$), scaled median and maximum cluster size, and scaled spectrum energy. These clustering metrics are used to help users determine the number of clusters.}
	\label{Fig_clustering_metric}						
\end{figure*}

\begin{figure*}[]
	\centering
	\begin{subfigure}[b]{0.4\textwidth}
		\includegraphics[width=\textwidth]{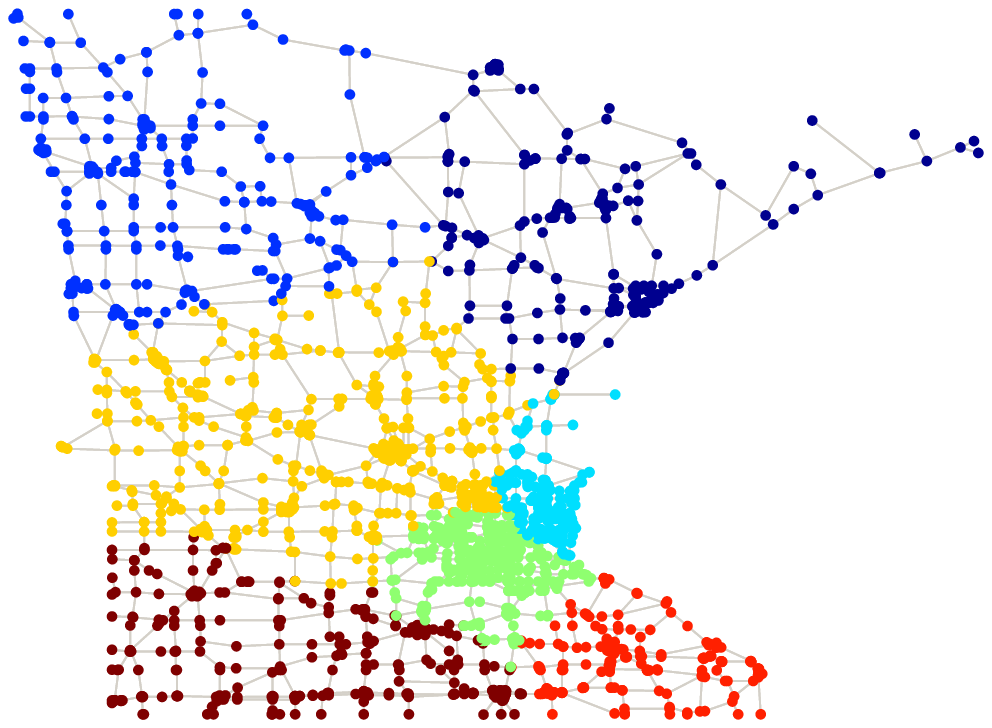}
		\vspace{-6mm}		
		\caption{$K=7$}
		\label{Fig_Minnesota_K7}
	\end{subfigure}%
	\centering
	\begin{subfigure}[b]{0.4\textwidth}
		\includegraphics[width=\textwidth]{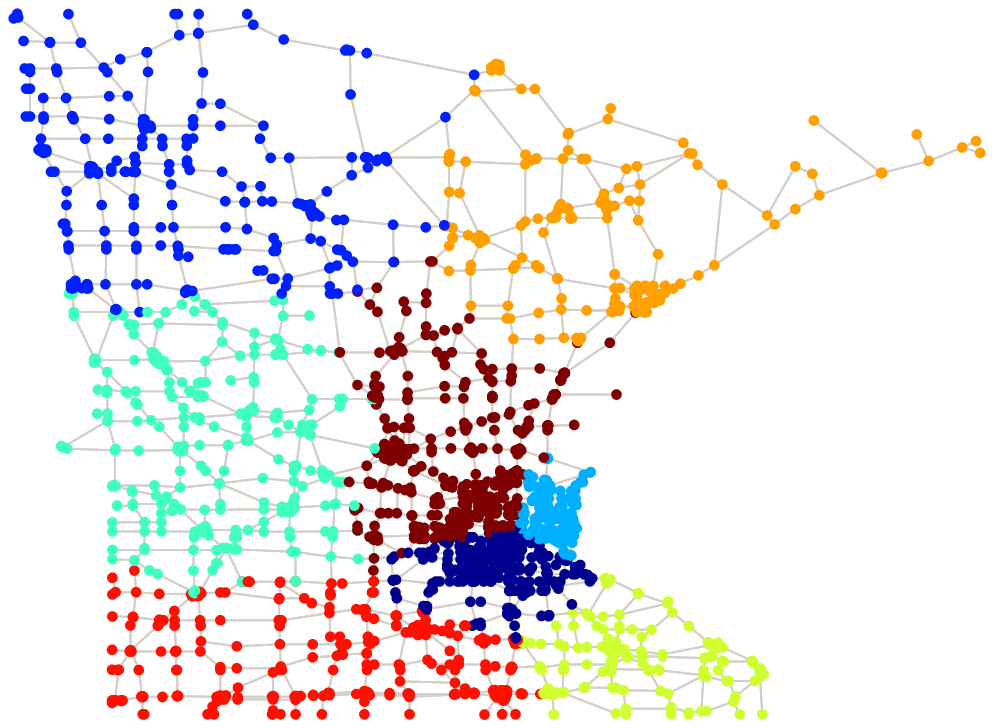}
		\vspace{-6mm}		
		\caption{$K=8$}
		\label{Fig_Minnesota_K8}
	\end{subfigure}%
	\\
	\centering
	\begin{subfigure}[b]{0.4\textwidth}
		\includegraphics[width=\textwidth]{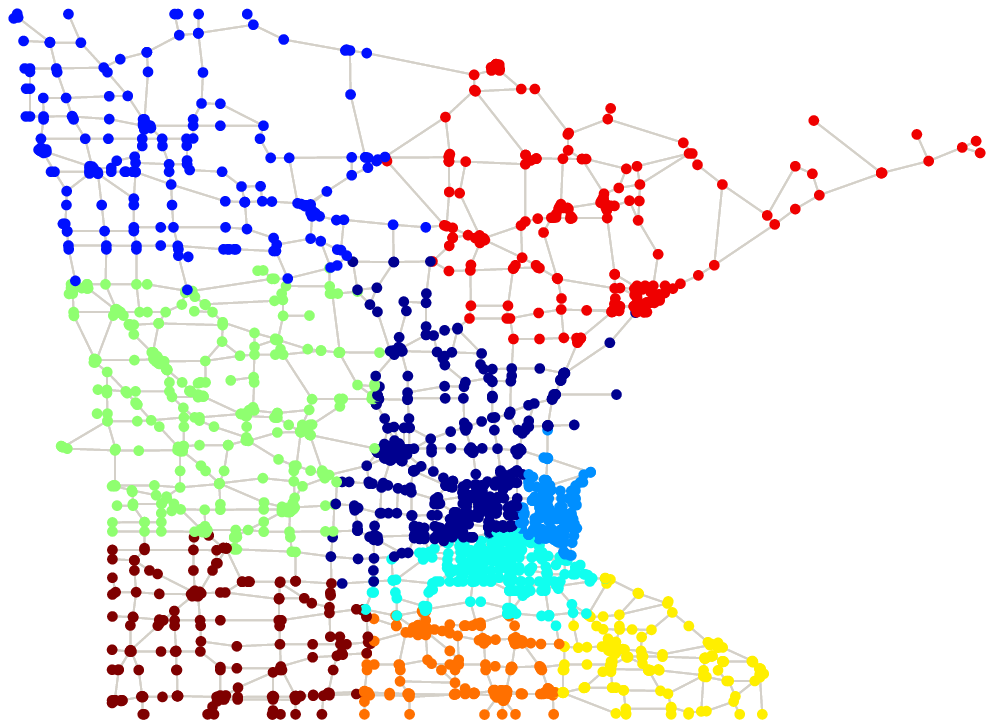}
		\vspace{-6mm}		
		\caption{$K=9$}
		\label{Fig_Minnesota_K9}
	\end{subfigure}%
	\centering
	\begin{subfigure}[b]{0.4\textwidth}	
		\includegraphics[width=\textwidth]{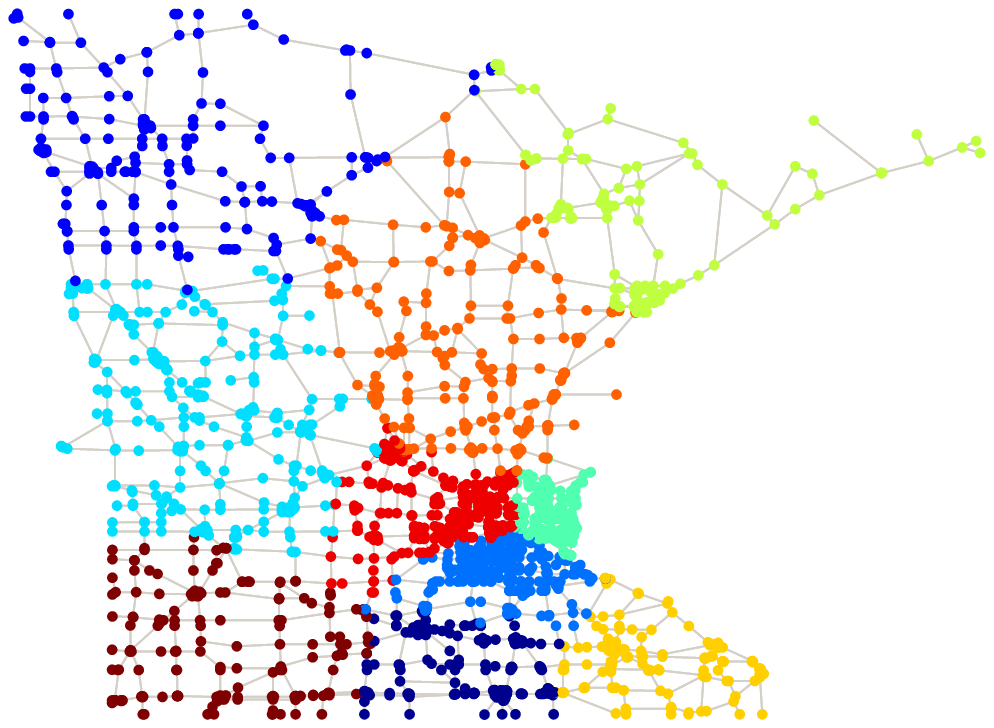}		
		\vspace{-6mm}	
		\caption{$K=10$}
		\label{Fig_Minnesota_K10}
	\end{subfigure}%
	\caption{Visualization of user-guided spectral clustering on Minnesota Road with respect to selected cluster count $K$. Colors represent different clusters.
	}
	\label{Fig_Minnesota_cluster_visualization}
\end{figure*}

We also apply the proposed incremental user-guided spectral
clustering  algorithm (\textbf{Algorithm}~\ref{algo_incremental_automated_clustering}) to
Power Grid, CLUTO, Swiss Roll, Youtube, and BlogCatalog. In
Fig. \ref{Fig_clustering_metric}, we show how the values of clustering metrics
change with $K$ for each dataset.  The incremental method enables us to efficiently generate
all clustering results with $K=2, 3, 4 \ldots$ and so on. It can be observed from Fig. \ref{Fig_clustering_metric} that for each dataset  the  clustering metric that exhibits the highest variation in $K$ can be different. This suggests that selecting the correct number of clusters is
	a difficult task and a user might need to use different clustering metrics for
	a range of $K$ values, and Incremental-IO is an
	effective tool to support such an endeavor.

\section{Conclusion}

In this paper we present Incremental-IO, an efficient incremental eigenpair
computation method for graph Laplacian matrices which works by transforming a
batch eigenvalue decomposition problem into a sequential leading eigenpair
computation problem. The method is elegant, robust and easy to implement using
a scientific programming language, such as Matlab.  We provide analytical proof
of its correctness. We also demonstrate that it achieves significant reduction
in computation time when compared with the batch computation method.
Particularly, it is observed that the difference in computation time of these
two methods grows polynomially as the graph size increases.

To demonstrate the effectiveness of Incremental-IO,
we also show experimental evidences that
obtaining such an incremental method by adapting the existing leading eigenpair
solvers (such as, the Lanczos algorithm) is non-trivial and such efforts generally do
not lead to a robust solution.

Finally, we demonstrate that the proposed incremental eigenpair computation
method (Incremental-IO) is an effective tool for a user-guided spectral clustering task, which
effectively updates clustering results and metrics for each increment of the
cluster count.

\section*{Acknowledgments}
{This research is sponsored by Mohammad Al Hasan's NSF CAREER Award (IIS-1149851). The contents are solely the responsibility of the authors and do not necessarily represent the official view of NSF. 
}
%
{ 
\bibliographystyle{abbrv}
\bibliography{SNAM_rev}
}

\clearpage

\end{document}